\numberwithin{equation}{section}
\newtheorem{theorem}{Theorem}[section]
\newtheorem{remark}[theorem]{Remark}
\newtheorem{proposition}[theorem]{Proposition}
\newtheorem{assumption}[theorem]{Assumption}
\renewcommand{\d}{\mathrm{d}}
\newcommand{\dd}{\,\mathrm{d}}
\newcommand{\R}{\mathbb{R}}
\newcommand{\N}{\mathbb{N}}
\newcommand{\E}{\mathbb{E}}
\newcommand{\bR}{\mathbb{R}}
\newcommand{\bN}{\mathbb{N}}
\newcommand{\bE}{\mathbb{E}}
\newcommand{\bP}{\mathbb{P}}
\title[Neural stochastic Volterra equations]{Neural stochastic Volterra equations:\\ learning path-dependent dynamics}
\author[Bergerhausen]{Martin Bergerhausen}
\address{Martin Bergerhausen, University of Mannheim, Germany}
\email{martin.bergerhausen@uni-mannheim.de}
\author[Pr{\"o}mel]{David J. Pr{\"o}mel}
\address{David J. Pr{\"o}mel, University of Mannheim, Germany}
\email{proemel@uni-mannheim.de}
\author[Scheffels]{David Scheffels}
\address{David Scheffels, University of Mannheim, Germany}
\email{david.scheffels@lessing-ffm.net}
\date{\today}
\begin{document}
 
\begin{abstract}
  Stochastic Volterra equations (SVEs) serve as mathematical models for the time evolutions of random systems with memory effects and irregular behaviour. We introduce neural stochastic Volterra equations as a physics-inspired architecture, generalizing the class of neural stochastic differential equations, and provide some theoretical foundation. Numerical experiments on various SVEs, like the disturbed pendulum equation, the generalized Ornstein--Uhlenbeck process, the rough Heston model and a monetary reserve dynamics, are presented, comparing the performance of neural SVEs, neural SDEs and Deep Operator Networks (DeepONets).
\end{abstract}

\maketitle

\noindent \textbf{Key words:} feedforward neural network, deep operator network, neural stochastic differential equation, stochastic Volterra equation, supervised learning.

\noindent \textbf{MSC 2020 Classification:} 62M45, 68T07, 60H20.



\section{Introduction}

Stochastic Volterra equations (SVEs) are used as mathematical models for the time evolutions of random systems appearing in various areas like biology, finance or physics. SVEs are a natural generalization of ordinary stochastic differential equations (SDEs) and, in contrast to SDEs, they are capable to represent random dynamics with memory effects and very irregular trajectories. For instance, SVEs are used in the modelling of turbulence \cite{Barndorff-Nielsen2008}, of volatility on financial markets~\cite{ElEuch2019} and of deoxyribonucleic acid (DNA) patterns \cite{Reynaud-Bouret2010}.

Combining differential equations and neural networks into hybrid approaches for statistical learning has been gaining increasing interest in recent years, see e.g. \cite{E2017,Chen2018}. This has led to many very successful data-driven methods to learn solutions of various differential equations. For instance, neural stochastic differential equations are SDEs with coefficients parametrized by neural networks, and serve as continuous-time generative models for irregular time series, see \cite{Liu2019,Li2020,Kidger2021,Issa2024}. Models based on neural SDEs are of particular interest in financial engineering, see \cite{Cuchiero2020,Gierjatowicz2022,Cohen2022}. Further examples of `neural' differential equations are neural controlled differential equations \cite{Kidger2020}, which led to very successful methods for irregular time series, neural rough differential equations \cite{Morrill2021}, which are especially well-suited for long time series, and neural stochastic partial differential equations \cite{Salvi2022}, which are capable to process data from continuous spatiotemporal dynamics. Loosely speaking, `neural' differential equations and their variants can be considered as continuous-time analogous to various recurrent neural networks.

In the present work, we introduce \textit{neural stochastic Volterra equations} as stochastic Volterra equations with coefficients parameterized by neural networks. They constitute a natural generalization of neural SDEs with the advantage that they are capable to represent time series with temporal dependency structures, which overcomes a limitation faced by neural SDEs. Hence, neural SVEs are suitable to serve as generative models for random dynamics with memory effects and irregular behaviour, even more irregular than neural SDEs. As theoretical justification for the universality of neural SVEs, we provide a stability result for general SVEs in Proposition~\ref{prop: stability SVE}, which can be combined with classical universal approximation theorems for neural networks~\cite{Cybenko1989,Hornik1991,Kidger2020,Kwossek2025}; cf. Remark~\ref{rem: neural networks}.

Relying on neural stochastic Volterra equations parameterized by feedforward neural networks, we study supervised learning problems for random Volterra type dynamics. More precisely, we consider setups, where the training sets consist of sample paths of the 'true'  Volterra process together with the associated realizations of the driving noise and the initial condition, and build a neural SVEs based model aiming to reproduce the sample paths as good as possible. A related supervised learning problem in the context of stochastic partial differential equations (SPDEs) was treated in~\cite{Salvi2022} introducing neural SPDEs. For unsupervised learning problems using neural SDEs we refer to \cite{Kidger2022}.

We numerically investigate the supervised learning problem for prototypical Volterra type dynamics such as the disturbed pendulum equation \cite{Oksendal2003}, the rough Heston model \cite{ElEuch2019}, the generalized Ornstein--Uhlenbeck process \cite{Vasicek2012} and a model for the dynamics of monetary reserves~\cite{Carmona2018c}. The performance of the neural SVE based models is compared to Deep Operator Networks (DeepONets) and to neural SDEs. Recall DeepONets are a popular class of neural learning algorithms for general operators on function spaces that were introduced in \cite{Lu2021}. For the training process of the neural SVE we choose the Adam algorithm, as introduced in \cite{Kingma2014}, which is known to be a well-suited stochastic gradient descent method for stochastic optimization problem.

The numerical study in Section~\ref{sec:numerical experiments} demonstrates that the presented neural SVE based methods significantly outperform DeepONets; see Table~\ref{table:1}-Table~\ref{table:rh}. In particular, neural SVE based methods generalize much more effectively, as evidenced by their strong performance on the test sets -- neural SVEs are up to $20$ times more accurate than DeepONets. Moreover, neural SVEs also outperform neural SDE based models for random dynamics with dependency structures; cf. Subsection~\ref{subsec: neural SDEs}. These observations highlight the advantages of the physics-informed architecture of neural SVEs for supervised learning problems involving random systems with Volterra-type dynamics.

\smallskip

\noindent \textbf{Organization of the paper:} In Section~\ref{sec:NSVEs} we introduce neural stochastic Volterra equations and their theoretical background. The numerical experiments are presented in Section~\ref{sec:numerical experiments}. In Appendix~\ref{sec: appendix} we present the postponed proofs regarding the stability of stochastic Volterra equations.

\section{Neural stochastic Volterra equations}\label{sec:NSVEs}

Let $(\Omega,\mathcal{F},(\mathcal{F}_t)_{t\in [0,T]},\mathbb{P})$ be a filtered probability space, which satisfies the usual conditions, $T\in (0,\infty)$ and $d,m\in\N$. Given an $\R^d$-valued random initial condition $\xi$ and an $m$-dimensional standard Brownian motion~$(B_t)_{t\in[0,T]}$, we consider the $d$-dimensional \textit{stochastic Volterra equation (SVE)}
\begin{equation}\label{eq:SVE}
  X_t = \xi\,  g(t)+\int_0^t K_{\mu}(t-s)\mu(s,X_s)\dd s+\int_0^t K_{\sigma}(t-s)\sigma(s,X_s)\dd B_s,\quad t\in [0,T],
\end{equation}
where  $g\colon [0,T]\to \R$ is a deterministic continuous function (where we usually normalize $g(0)=1$), the coefficients $\mu\colon [0,T]\times\R^d\to\R^d $ and $\sigma\colon [0,T]\times\R^d\to\R^{d\times m}$, and the convolutional kernels $K_\mu, K_\sigma\colon [0,T]\to \R$ are measurable functions. Furthermore, $\int_0^t K_{\sigma}(t-s)\sigma(s,X_s)\dd B_s$ is defined as an It{\^o} integral. We refer to \cite{Karatzas1991,Oksendal2003} for introductory textbooks on stochastic integration and to \cite{Pardoux1990,Cochran1995,Coutin2001} for classical results on SVEs.

\smallskip

To define the notion of a (strong) $L^p$-solution, let $L^p(\Omega\times [0,T])$ be the space of all real-valued, $p$-integrable functions on $\Omega\times [0,T]$. We call an $(\mathcal{F}_t)_{t\in[0,T]}$-progressively measurable stochastic process $(X_t)_{t\in [0,T]}$ in $L^p(\Omega\times [0,T])$, on the given probability space $(\Omega,\mathcal{F},(\mathcal{F}_t)_{t\in[0,T]},\mathbb{P})$, a (strong) $L^p$-solution of the SVE~\eqref{eq:SVE} if $ \int_0^t (|K_\mu(t-s)\mu(s,X_s)|+|K_\sigma(t-s)\sigma(s,X_s)|^2 )\dd s<\infty$ for all $t\in[0,T]$ and the integral equation~\eqref{eq:SVE} hold $\mathbb{P}$-almost surely. As usual, a strong $L^1$-solution $(X_t)_{t\in [0,T]}$ of the SVE~\eqref{eq:SVE} is often just called solution of the SVE~\eqref{eq:SVE}.

\subsection{Neural SVEs}

To learn the dynamics of the SVE~\eqref{eq:SVE}, that is, the corresponding operators $\xi$, $g$, $K_\mu$, $K_\sigma$, $\mu$ and $\sigma$, we rely on some neural network architecture. To that end, let for some latent dimension $d_h>d$,
\begin{align*}
  &L_\theta\colon \R^d\to\R^{d_h},\quad g_\theta\colon [0,T]\to\R,\quad K_{\mu,\theta}\colon [0,T]\to \R,\quad K_{\sigma,\theta}\colon [0,T]\to \R, \\
  &\mu_\theta\colon [0,T]\times \R^{d_h}\to\R^{d_h}, \quad \sigma_\theta\colon [0,T]\times \R^{d_h}\to\R^{d_h\times m}, \quad \Pi_\theta\colon \R^{d_h}\to \R^d
\end{align*}
be seven feedforward neural networks (see~\cite[Section~3.6.1]{Yadav2015}) that are parameterized by some common parameter $\theta$. Note that $L_\theta$ lifts the given initial value to the latent space $\R^{d_h}$, $\Pi_\theta$ is the readout back from the latent space to the space $\R^d$, and the other networks try to imitate their respectives in Equation \eqref{eq:SVE} on the latent $d_h$-dimensional space.

\smallskip

Given the input data $\xi\in\R^d$ and $(B_t)_{t\in[0,T]}\in C([0,T];\R^m)$, $\mathbb{P}$-a.s., we introduce the \textit{neural stochastic Volterra equations}
\begin{align}\label{eq:neuralSVE}
  Z_0 &= L_\theta(\xi),\notag\\
  Z_t &= Z_0\, g_\theta(t)+\int_0^t K_{\mu,\theta}(t-s)\mu_\theta(s,Z_s)\dd s + \int_0^t K_{\sigma,\theta}(t-s)\sigma_\theta(s,Z_s)\dd B_s,\\
  X_t&=\Pi_\theta(Z_t),\quad t\in [0,T].\notag
\end{align}
The objective is to optimize $\theta$ as good as possible such that the generated paths are as close as possible to the given training paths. Therefore, one needs to solve a stochastic optimization problem at each training step. One typically chosen and well-suited stochastic gradient descent method for stochastic optimization problems is the Adam algorithm, introduced in \cite{Kingma2014}. The Adam algorithm is known to be computationally efficient, requires little memory, is invariant to diagonal rescaling of gradients and is well-suited for high-dimensional problems with regard to data/parameters.

Given a trained supervised model $(L_\theta, K_{\mu,\theta}, K_{\sigma,\theta}, \mu_\theta, \sigma_\theta, \Pi_\theta)$, we can evaluate the neural SVE~\eqref{eq:neuralSVE} given the input data $(\xi,B)$ by using any numerical scheme for stochastic Volterra equations. For that purpose, we use the Volterra Euler--Maruyama scheme introduced in \cite{Zhang2008} for the training procedure. Note that Lipschitz conditions on $\mu_{\theta}$ and $\sigma_\theta$ can be imposed by using, e.g., LipSwish, ReLU or tanh activation functions.

\subsection{Neural network architecture}

The structure of the neural SVE model \eqref{eq:neuralSVE} is analogously defined to the structure of neural stochastic differential equations, as introduced in \cite{Kidger2022}, and of neural stochastic partial differential equations, as introduced in \cite{Salvi2022}. The $d_h$-dimensional process $Z$ represents the hidden state. We impose the readout $\Pi_\theta$ to get back to dimension $d$. The model has, at least if one considers a setting where the initial condition cannot be observed like an unsupervised setting, some minimal amount of architecture. It is in such a setting necessary to induce the lift $L_\theta$ and the randomness by some additional variable $\tilde{\xi}$ to learn the randomness induced by the initial condition $X_0=\Pi_\theta\big(L_\theta(\tilde{\xi})g_\theta(0)\big)$ (otherwise $X_0$ would not be random since it does not depend on the Brownian motion~$B$). Moreover, the structure induced by the lift $L_\theta$ and the readout $\Pi_\theta$ is the natural choice to lift the $d$-dimensional SVE~\eqref{eq:SVE} to the latent dimension $d_h>d$.

\smallskip

We use LipSwish activation functions in any layer of any network. These were introduced in \cite{Chen2019} as $\rho(z)=0.909z\sigma(z)$, where $\sigma$ is the sigmoid function. Due to the constant $0.909$, LipSwish activations are Lipschitz continuous with Lipschitz constant one and smooth. Moreover, there is strong empirical evidence that LipSwish activations are very suitable for a variety of challenging approximation tasks, see~\cite{Ramachandran2017}.

\smallskip

For a given latent dimension $d_h>d$, the lift $L_\theta$ is modeled as a linear $1$-layer network from dimension $d$ to $d_h$ without any additional hidden layer, and, as its counterpart, the readout $\Pi_\theta$ as a linear $1$-layer network from $d_h$ to $d$. The networks $K_{\mu,\theta},K_{\sigma,\theta}$ and $g_\theta$ are all designed as linear networks from dimension $1$ to $1$ with two hidden layers of size $d_K$ for some additional dimension $d_K>d$. Lastly, the network $\mu_\theta$ is defined as a linear network from dimension $1+d_h$ to $d_h$ with one hidden layer of size $d_h$ and the network $\sigma_\theta$ from $1+d_h$ to $d_h\cdot m$ with one hidden layer of size $d_h\cdot m$.

\subsection{Stability for SVEs}

The mathematical reason that neural stochastic Volterra equations provide a suitable structure for learning the dynamics of general SVEs is the universal approximation property of neural networks, see e.g. \cite{Cybenko1989,Hornik1991,Kidger2020b,Kwossek2025}, and the stability result for SVEs, presented in this subsection. More precisely, our stability result yields that if we approximate the kernels and coefficients of an SVE sufficiently well, we get a good approximation of the solution by the respective approximating solutions. To formulate the stability result, we need the following definitions and assumptions.

\smallskip

For $p\geq 1$, the $L^p$-norm of a function $h\colon [0,T]\to\R$ is defined by
\begin{equation*}
  \|h\|_p := \Big( \int_0^T |h(s)|^p\dd s \Big)^{\frac{1}{p}},
\end{equation*}
and the $\sup$-norms for functions $f\colon [0,T]\times \R^d\to \R^d$ and $g\colon [0,T]\to\R^d$, respectively, are given by
\begin{equation*}
  \|f\|_\infty := \sup_{t\in[0,T],x\in\R^d}|f(t,x)| \quad \text{and}\quad \|g\|_\infty := \sup_{t\in[0,T]}|g(t)|.
\end{equation*}

As approximation of the SVE~\eqref{eq:SVE}, we consider a sequence of SVEs, given by
\begin{equation}\label{eq:subsequence of SVEs}
  X^n_t = \xi g_n(t) + \int_0^t K_{\mu,n} (t-s) \mu_n(s,X^n_s) \dd s + \int_0^t K_{\sigma,n} (t-s) \sigma_n(s, X^n_s) \dd B_s,
\end{equation}
for $t \in [0,T]$ and $n\in \N$. We make the following assumptions on the kernels $K_{\mu,n},K_{\sigma,n}$, the coefficients $\mu_n, \sigma_n$ and the initial conditions~$g_n$.

\begin{assumption}\label{ass:kernel}
  The initial conditions $g,g_n\colon[0,T]\to \R^d$ and kernels $K_\mu, K_\sigma\colon [0,T]\to \R$ and $ K_{\mu,n},K_{\sigma,n}\colon [0,T]\to \R$ for $n\in \N$ satisfy the following conditions: There are constants $\gamma\in (0,\frac{1}{2}]$, $\varepsilon>0$ and $L>0$, such that
  \begin{enumerate}
    \item[(i)] for all $n \in \bN$ the measurable functions $K_{\mu,n}, K_{\sigma,n}\colon [0,T] \to [0,\infty)$ fulfill
    \begin{align*}
      &\int_0^{T-h} |K_{\mu,n}(h+r)-K_{\mu,n}(r)|^{1+\varepsilon}\dd r + \int_0^h |K_{\mu,n}(r)|^{1+\varepsilon}\dd r \leq Lh^{\gamma(1+\varepsilon)},\\
      &\int_0^{T-h} |K_{\sigma,n}(h+r)-K_{\sigma,n}(r)|^{2+\varepsilon}\dd r  + \int_0^{h} |K_{\sigma,n}(r)|^{2+\varepsilon}\dd r  \leq Lh^{\gamma(2+\varepsilon)},
    \end{align*}
    for all $h \in [0,T]$;
    \item[(ii)] it holds that
    \begin{equation*}
      \int_0^T |K_{b,n}(s)-K_b(s)| \dd s \to 0 \text{ as } n \to \infty
    \end{equation*}
    and
    \begin{equation*}
      \int_0^T |K_{\sigma,n}(s)-K_\sigma(s)|^{2 + \varepsilon} \dd s \to 0 \text{ as } n \to \infty;
    \end{equation*}
    \item[(iii)] $g$ and $g_n$ are $\gamma$-H{\"o}lder-continuous.
   \end{enumerate}
\end{assumption}

\begin{assumption}\label{ass:coefficients2}
  Let $\mu,\mu_n\colon [0,T]\times\R^{d}\to\R^{d} $ and $\sigma,\sigma_n\colon [0,T]\times\R^{d}\to\R^{d\times m} $, $n \in \N$, be continuous functions such that:
  \begin{enumerate}
    \item[(i)] $\mu,\sigma$ and $\mu_n,\sigma_n$ are (uniformly) of linear growth, i.e. there is a constant $C_{\mu,\sigma}>0$ such that
    \begin{equation*}
      \sup_{n \in \bN} |\mu_n(t,x)|+|\sigma_n(t,x)|+ |\mu(t,x)|+|\sigma(t,x)|\leq C_{\mu,\sigma}(1+|x|),
    \end{equation*}
    for all $t\in [0,T]$ and $x\in\R^d$.
    \item[(ii)] For any compact subset $K \subset \bR^d$ we have
	      \begin{equation*}
			\lim_{n \to \infty} \sup_{t \in [0,T]} \sup_{x \in K} |\sigma_n(t,x)-\sigma(t,x)|+ |\mu_n(t,x)-\mu(t,x)|+ |g_n(t)-g(t)| = 0.
		  \end{equation*}
  \end{enumerate}
\end{assumption}

Based on the aforementioned assumptions, we obtain the following stability result for stochastic Volterra equations, generalizing the classical stability result for ordinary stochastic differential equations proven in \cite{Kaneko1988}.

\begin{theorem}\label{thm: stability of SVE}
  Suppose Assumption~\ref{ass:kernel} and Assumption~\ref{ass:coefficients2} and let $\xi\in L^p(\Omega)$ with $p > \max\{\frac{1}{\gamma},\frac{4+2\varepsilon}{\varepsilon}\}$. Moreover, suppose that there are unique $L^p$-solutions $(X_t)_{t\in[0,T]}$ and $(X^n_t)_{t\in[0,T]}$ to the SVEs \eqref{eq:SVE} and \eqref{eq:subsequence of SVEs}, for $n\in \N$, respectively. Then, one has
  \begin{equation*}
    \lim_{n \to \infty} \E \bigg[ \sup_{t \in [0,T]} |X^n_t-X_t|^2\bigg]=0.
  \end{equation*}
\end{theorem}

\begin{proof}
	See Appendix \ref{sec: appendix}.
\end{proof}

Note that the assumption on the existence of unique solutions can be ensured by postulating the coefficients to be Lipschitz continuous, see e.g. \cite{Wang2008}. However, for instance, in a one-dimensional setting a unique solution can also be obtained for H{\"o}lder continuous diffusion coefficients, see e.g. \cite{AbiJaberElEuch2019b,Promel2023}.

\medskip

Assuming that the coefficients of a SVE are Lipschitz continuous, one can quantify the stability result of Theorem~\ref{thm: stability of SVE}, as we shall present below. To that end, we consider, as comparison to the SVE~\eqref{eq:SVE}, the SVE
\begin{equation}\label{eq:SVE2}
  \tilde{X}_t = \xi\, \tilde{g}(t)+\int_0^t \tilde{K}_{\mu}(t-s)\tilde{\mu}(s,\tilde{X}_s)\dd s+\int_0^t \tilde{K}_{\sigma}(t-s)\tilde{\sigma}(s,\tilde{X}_s)\dd B_s,\quad t\in [0,T],
\end{equation}
where $\tilde{g}\colon [0,T]\to \R$ is a continuous function, and where the coefficients $\tilde{\mu}\colon [0,T]\times\R^d\to\R^d $ and $\tilde{\sigma}\colon [0,T]\times\R^d\to\R^{d\times m}$, and the convolutional kernels $\tilde{K}_\mu, \tilde{K}_\sigma\colon [0,T]\to \R$ are measurable functions.

For the convolutional kernels, we make the following assumption.

\begin{assumption}\label{ass:norms}
  Let $q,\tilde{q}> 1$ and $p\geq 2$ be such that
  \begin{equation}\label{def:p}
    \frac{1}{p}+\frac{1}{q}=1 \quad \text{and}\quad \frac{2}{p}+\frac{1}{\tilde{q}}=1.
  \end{equation}
  Suppose that $\|K_{\mu}\|_q +\|\tilde{K}_\mu\|_q<\infty$, $\|K_{\sigma}\|_{2\tilde{q}} +\|\tilde{K}_\sigma\|_{2\tilde{q}}<\infty$, and $\|g\|_\infty +\|\tilde{g}\|_\infty <\infty$.
\end{assumption}

For the coefficients, we require the standard Lipschitz and linear growth conditions.

\begin{assumption}\label{ass:coefficients}
  Let $\mu,\tilde{\mu} \colon [0,T]\times\R\to\R^d $ and $\sigma,\tilde{\sigma}\colon [0,T]\times\R\to\R^{d\times m} $ be measurable functions such that:
  \begin{enumerate}
    \item[(i)] $\mu,\sigma$ and $\tilde{\mu},\tilde{\sigma}$ are of linear growth, i.e. there is a constant $C_{\mu,\sigma}>0$ such that
    \begin{equation*}
       |\tilde{\mu}(t,x)|+|\tilde{\sigma}(t,x)|+ |\mu(t,x)|+|\sigma(t,x)|\leq C_{\mu,\sigma}(1+|x|),
    \end{equation*}
    for all $t\in [0,T]$ and $x\in\R^d$.
    \item[(ii)] $\mu,\sigma$ and $\tilde{\mu},\tilde{\sigma}$ are Lipschitz continuous in the space variable uniformly in time, i.e. there is a constant $C_{\mu,\sigma}>0$ such that
    \begin{align*}
      &|\mu(t,x)-\mu(t,y)|+|\sigma(t,x)-\sigma(t,y)|\leq C_{\mu,\sigma}|x-y| \quad \text{and}\\
      &|\tilde{\mu}(t,x)-\tilde{\mu}(t,y)|+|\tilde{\sigma}(t,x)-\tilde{\sigma}(t,y)|\leq C_{\mu,\sigma}|x-y|
    \end{align*}
    holds for all $t\in [0,T]$ and $x,y\in \R^d$.
  \end{enumerate}
\end{assumption}

Based on these assumptions, we obtain the following stability result for stochastic Volterra equations with Lipschitz continuous coefficients. For related stability results in the context ordinary stochastic differential equations, we refer to \cite[Section~3]{Kwossek2025} and the references therein.

\begin{proposition}\label{prop: stability SVE}
  Suppose Assumption~\ref{ass:norms}, Assumption~\ref{ass:coefficients} and assume $\xi\in L^p(\Omega)$. Let $(X_t)_{t\in[0,T]}$ and $(\tilde{X}_t)_{t\in[0,T]}$ be the solutions to the SVEs \eqref{eq:SVE} and \eqref{eq:SVE2}, respectively. Then, there is some constant $C>0$, depending on $\mu,\sigma,\tilde{\mu},\tilde{\sigma},{K}_\mu,{K}_\sigma,\tilde{K}_\mu,\tilde{K}_\sigma,p,\xi$, such that
  \begin{equation}\label{eq:lem_stability}
    \sup_{t\in [0,T]}\E[|X_t-\tilde{X}_t|^p]\leq C \Big( \|g-\tilde{g}\|_{\infty}^p+\|\mu-\tilde{\mu}\|_{\infty}^p+\|\sigma-\tilde{\sigma}\|_{\infty}^p+\|K_\mu-\tilde{K}_\mu\|_{q}^p + \|K_\sigma -\tilde{K}_\sigma\|_{2\tilde{q}}^p \Big).
  \end{equation}
\end{proposition}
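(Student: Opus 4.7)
The plan is to subtract the two SVEs \eqref{eq:SVE} and \eqref{eq:SVE2}, split the resulting difference $X_t - \tilde X_t$ into five pieces, each depending on the discrepancy of only one object ($g$, $K_\mu$, $\mu$, $K_\sigma$, or $\sigma$), and estimate each piece in $L^p$ by a combination of H\"older's inequality (for the drift integrals) and the Burkholder--Davis--Gundy (BDG) inequality (for the stochastic integrals). The pieces in which the coefficient differences act on the unknown $\tilde X$ are handled via Lipschitz continuity and will produce a term of the form $\int_0^t \E[|X_s - \tilde X_s|^p]\dd s$ on the right-hand side, so that a final application of Gronwall's lemma closes the estimate.

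Concretely, I would first decompose
\begin{equation*}
X_t - \tilde X_t = \xi\,(g(t) - \tilde g(t)) + I_1(t) + I_2(t) + I_3(t) + I_4(t),
\end{equation*}
with $I_1(t) := \int_0^t (K_\mu - \tilde K_\mu)(t-s)\,\mu(s,X_s)\dd s$, $I_2(t) := \int_0^t \tilde K_\mu(t-s)(\mu(s,X_s) - \tilde\mu(s,\tilde X_s))\dd s$, and $I_3, I_4$ the analogous stochastic integrals, then apply $|a_1 + \dots + a_5|^p \leq 5^{p-1}\sum_i |a_i|^p$. For the pure kernel-difference pieces, H\"older with the conjugate pair $(q,p)$ on $I_1$, and BDG followed by H\"older with $(\tilde q, p/2)$ on $I_3$, together with $\|\mu\|_\infty, \|\sigma\|_\infty < \infty$, yield bounds of the form $C\,\|K_\mu - \tilde K_\mu\|_q^p$ and $C\,\|K_\sigma - \tilde K_\sigma\|_{2\tilde q}^p$; since $[0,T]$ is bounded and $2\tilde q \geq q$ for $p \geq 2$, the $q$-norm can be absorbed into the $2\tilde q$-norm up to a factor $T^{1/q - 1/(2\tilde q)}$. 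For the coefficient-difference pieces, the splitting
\begin{equation*}
\mu(s,X_s) - \tilde\mu(s,\tilde X_s) = \bigl(\mu(s,X_s) - \mu(s,\tilde X_s)\bigr) + \bigl(\mu(s,\tilde X_s) - \tilde\mu(s,\tilde X_s)\bigr)
\end{equation*}
(and its analogue for $\sigma$) separates a Lipschitz contribution, which after H\"older/BDG produces $C\int_0^t \E[|X_s - \tilde X_s|^p]\dd s$, from a coefficient-deviation contribution bounded by $C\,\|\mu - \tilde\mu\|_\infty^p$ (respectively $C\,\|\sigma - \tilde\sigma\|_\infty^p$), where the constants absorb the finite quantities $\|\tilde K_\mu\|_q$ and $\|\tilde K_\sigma\|_{2\tilde q}$ from Assumption~\ref{ass:norms}. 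The term $\xi(g - \tilde g)$ is immediate and contributes $\E[|\xi|^p]\|g - \tilde g\|_\infty^p$.

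Collecting all pieces yields an inequality of the form
\begin{equation*}
\E[|X_t - \tilde X_t|^p] \leq C_1 D + C_2 \int_0^t \E[|X_s - \tilde X_s|^p]\dd s,
\end{equation*}
where $D$ denotes the right-hand side of \eqref{eq:lem_stability}, and the standard Gronwall lemma then delivers the bound after taking a supremum in $t \in [0,T]$. The main delicacy is the choice of H\"older exponents: because BDG produces an $L^{p/2}$-quantity on the stochastic pieces, the matching conjugate exponent is precisely $\tilde q$ from \eqref{def:p}, which forces the $2\tilde q$-integrability of the kernel differences and explains the otherwise unusual norm appearing in \eqref{eq:lem_stability}. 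Once this matching is identified, every remaining estimate is a routine application of standard inequalities, and no singular-kernel Gronwall is needed since the kernels have already been absorbed into constants via H\"older before Gronwall is invoked.
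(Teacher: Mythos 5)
Your proposal follows essentially the same route as the paper: the same five-term decomposition, H\"older with exponents $(q,p)$ on the drift pieces, BDG followed by H\"older with $(\tilde q, p/2)$ on the stochastic pieces, the Lipschitz/deviation splitting of the coefficient differences, and Gr\"onwall to close. Your additional remark that $\|K_\mu-\tilde K_\mu\|_q$ can be absorbed into $\|K_\mu-\tilde K_\mu\|_{2\tilde q}$ on the bounded interval (since $2\tilde q\geq q$ for $p\geq 2$) is correct and in fact cleanly justifies the norm appearing in \eqref{eq:lem_stability}, a step the paper's own write-up passes over.
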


\begin{proof}
	See Appendix \ref{sec: appendix}.
\end{proof}

\begin{remark}\label{rem: neural networks}
  The stability results, presented in Theorem~\ref{thm: stability of SVE} and Proposition~\ref{prop: stability SVE}, demonstrate the universality of neural stochastic Volterra equations like~\eqref{eq:neuralSVE}. Indeed, the unique solution of any general stochastic Volterra equations can be approximated arbitrary well by solutions of neural stochastic Volterra equations, assuming that the associated neural network converges in a suitable sense. For the coefficients the required suitable convergence is the local uniform convergence subject to a uniform global linear growth constraint. As shown in \cite{Kwossek2025}, many frequently used classes of neural networks do allow for such convergence, in particular, neural networks based on  LipSwish activation functions, as we use in the numerical experiments below. For the kernels, the suitable type of convergence is an $L^p$-convergence on the compact interval $[0,T]$. $L^p$-type universal approximation theorem can already be found in the classical work of \cite{Hornik1991}; we also refer to \cite{Kidger2020b} and the references therein.
\end{remark}

\section{Numerical experiments}\label{sec:numerical experiments}

In this section, we numerically investigate the supervised learning problem utilizing neural stochastic Volterra equations aiming to learn Volterra type dynamics such as the disturbed pendulum equation, the generalized Ornstein--Uhlenbeck process, a model for the dynamics of monetary reserves, and the rough Heston model. The performance is compared to Deep Operator Networks and neural stochastic differential equations. For all the neural SVEs, we chose the latent dimensions $d_h=d_K=12$ which experimentally proved to be well-suited. We consider the interval $[0,T]$ for $T=5$ and discretize it equally-sized using the grid size $\Delta t=0.1$.

\smallskip

As a benchmark model, we use the Deep Operator Network (DeepONet) algorithm. DeepONet is a popular class of neural learning algorithms for general operators on function spaces that was introduced in \cite{Lu2021}. A DeepONet consists of two neural networks: the branch network which operates on the function space $C([0,T];\R^n)$ (where $[0,T]$ is represented by some fixed discretization), and the so-called trunk network which operates on the evaluation point $t\in[0,T]$. Then, the output of the DeepONet is defined as
\begin{equation*}
  \text{DeepONet}(f)(t) = \sum_{k=1}^p b_kt_k+b_0,
\end{equation*}
where $(b_k)_{k=1,\dots,p}$ is the output of the branch network operating on the discretization of $f\in C([0,T];\R^n)$, $(t_k)_{k=1,\dots,p}$ is the output of the trunk network operating on $t\in[0,T]$ and $p\in\N$ is the dimension of the output of both networks. Following \cite{Lu2021}, we model both networks as feedforward networks. We perform a grid search to optimally determine the depth and width of both networks such as the activation functions, optimizer and learning rate.

\smallskip

We perform experiments on a one-dimensional disturbed pendulum equation, a one- and a two-dimensional Ornstein--Uhlenbeck equation as well as a one-dimensional rough Heston equation. We perform the experiments on low-, mid- and high-data regimes with $n=100$, $n=500$ and $n=2000$, and use $80\%$ of the data for training and $20\%$ for testing. We compare the results of neural SVEs to those of DeepONet and consider for both algorithms the mean relative $L^2$-loss. All experiments are trained for an appropriate number epochs of iterations until there is no improvement anymore. For neural SVEs, we use the Adam stochastic optimization algorithm, which heuristically proved to be well-suited, with learning rate $0.01$ and scale the learning rate by a factor $0.8$ after every $25\%$ of epochs.

Note that since DeepONet is not able to deal with random initial conditions, we use deterministic initial conditions $\xi=2$ in the DeepONet experiments. For neural SVEs we use initial conditions $\xi\sim\mathcal{N}(2,0.2)$ unless stated otherwise.

\begin{remark}
  The results in this section show that neural SVEs are able to outperform DeepONet significantly (see Table~\ref{table:1}-Table~\ref{table:rh}). Especially, neural SVEs generalize much better which can be seen in the good performance on the test sets where neural SVEs are up to $20$ times better than DeepONet. This can be explained by the explicit structure of the Volterra equation that is already part of the model for neural SVEs.
\end{remark}

All the code is published on~\url{https://github.com/davidscheffels/Neural_SVEs}.

\subsection{Disturbed pendulum equation}

As first example, we study the disturbed pendulum equation resulting from Newton's second law. Recall, general second-order differential systems (without first-order terms) perturbed by a multiplicative noise are given by
\begin{equation*}
  y^{\prime\prime}(t) = \mu(t,y(t)) + \sigma(t,y(t))\dot{B}_t, \quad t\in[0,T],
\end{equation*}
where $\dot{B}_t=\frac{d B_t}{dt}$ is White noise for some standard Brownian motion $(B_t)_{t\in[0,T]}$. Using the deterministic and the stochastic Fubini theorem, this system can be rewritten as stochastic Volterra equations
\begin{equation*}
  y(t) = y(0) + t\cdot y^\prime(0)+\int_0^t (t-u) \mu(u,y(u))\dd u +\int_0^t(t-u) \sigma(u,y(u))\dd B_u.
\end{equation*}
A concrete example from physics is the disturbed pendulum equation (see \cite[Exercise~5.12]{Oksendal2003}) resulting from Newton's second law, see e.g.~\cite[Section~2.4]{Kreyszig1999}, which describes the motion of an object $X$ with deterministic initial value $x_0$ under some force $F$, can be described by the differential equation
\begin{equation*}
  m\frac{\dd^2 X(t)}{\dd t^2}=F(X(t)),\quad X(0)=x_0.
\end{equation*}
Hence, $(X_t)_{t\in[0,T]}$ solves the SVE
\begin{equation*}
  X(t)= x_0+tX^\prime(0)+\int_0^t(t-s)\frac{F(X(s))}{m}\dd s + \int_0^t(t-s)\frac{\varepsilon X_s}{m}\dd B_s.
\end{equation*}
As prototyping example, we consider the one-dimensional equation
\begin{equation}\label{eq:exp_disturbedpendulum}
  y_t = \xi - \int_0^t (t-s)y_s\dd s +\int_0^t (t-s)y_s\dd B_s,\qquad t\in[0,T],
\end{equation}
with the target to learn its dynamics by neural SVEs and DeepONet. The results are presented in Table~\ref{table:1}.

\begin{table}[H]
  \begin{tabular}{ |c|c|c|c| }
  \hline
  \textbf{Neural SVE} & Train set & Test set \\
  \hline
  $n=100$ & $0.01$ & $0.013$ \\
  $n=500$& $0.008$ & $0.008$ \\
  $n=2000$& $0.006$ & $0.006$ \\
  \hline
  \end{tabular}
  \hspace*{2pt}
  \begin{tabular}{ |c|c|c|c| }
  \hline
  \textbf{DeepONet} & Train set & Test set \\
  \hline
  $n=100$ & $0.003$ & $0.2$ \\
  $n=500$& $0.003$ & $0.06$ \\
  $n=2000$& $0.003$ & $0.02$ \\
  \hline
  \end{tabular}
  \caption{Mean relative $L^2$-losses after training for the disturbed pendulum equation~\eqref{eq:exp_disturbedpendulum}.}
  \label{table:1}
\end{table}

Sample paths of the training and the testing sets together with their learned approximations are shown in Figure~\ref{figure:paths_pen}. It is clearly visible that while DeepONet is not able to generalize properly to the testing set, the learned neural SVE paths are very close to the true paths also for the test set.

\begin{figure}[H]
  \begin{tblr}{|m{3cm}|m{3cm}|m{3cm}|m{3cm}|}
  \hline
  Neural SVE: \newline Training set & Neural SVE: \newline Test set & DeepONet: \newline Training set & DeepONet: \newline Test set\\
  \hline
  \includegraphics[width=2.8cm, height=5cm]{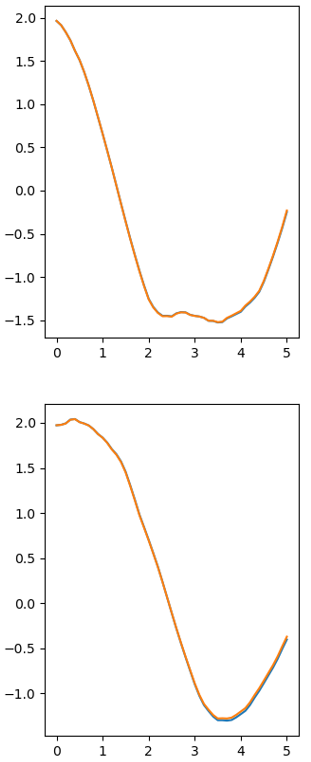} & \includegraphics[width=2.8cm, height=5cm]{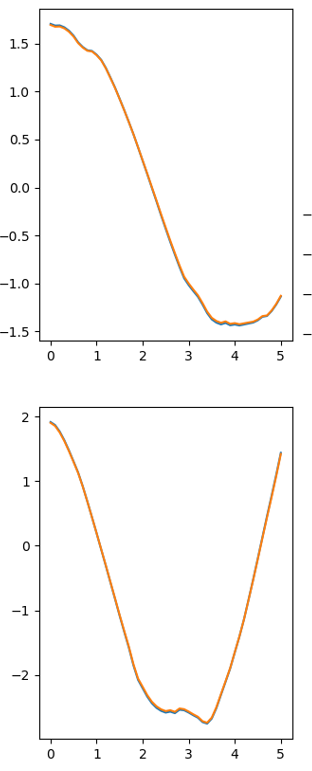} & \includegraphics[width=2.8cm, height=5cm]{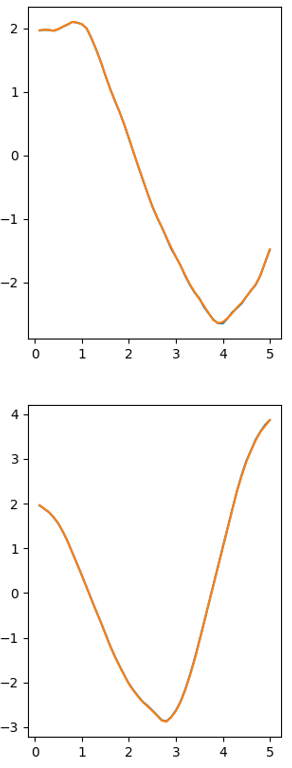} & \includegraphics[width=2.8cm, height=5cm]{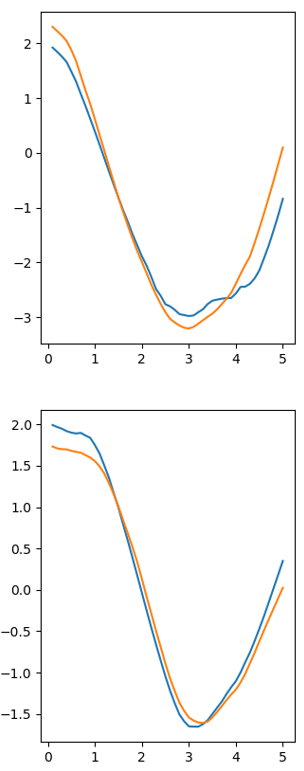}\\
  \hline
  \end{tblr}
  \caption{Sample neural SVE and DeepONet paths from the training and the test set for the disturbed pendulum equation and $n=100$. Blue (barely visible) are the original paths and orange the learned approximations.}
  \label{figure:paths_pen}
\end{figure}

To highlight the performance of Neural SVE in a more complex setting, we take a nonlinear coefficient~$\mu$ and consider
\begin{equation}\label{eq:NPEN}
  y_t = \xi - \int_0^t (t-s) \sin(y_s)\dd s +\int_0^t 0.4(t-s)y_s\dd B_s,\qquad t\in[0,T].
\end{equation}
The results are presented in Table \ref{table:NPEN}. It stands out immediately that for $n=100$ the performance in the test set is far worse than in the training set. Most solutions remain in the range of $[-\pi,\pi]$. Yet, in some rare cases, solutions may explode outside this range. An example of this can be seen in Figure \ref{figure:paths_NPEN}. In the training set this happens too rarely for the neural network to learn the functions outside of $[-\pi,\pi]$ precisely. Table \ref{table:NPEN} clearly shows that this discrepancy disappears when the training set is sufficiently large. In the case of $n = 2000$ one also sees that the neural network can learn these complex functions as good as in the prior example. Then the plots for the test set look similar to the training set plots of the Neural SVE trained with $n=100$ trajectories.

\begin{table}[H]
  \begin{tabular}{ |c|c|c|c| }
  \hline
  \textbf{Neural SVE} & Train set & Test set \\
  \hline
  $n=100$ & $0.007$ & $0.028$ \\
  $n=500$& $0.012$ & $0.014$ \\
  $n=2000$& $0.007$ & $0.006$ \\
  \hline
  \end{tabular}
  \hspace*{2pt}
  \begin{tabular}{ |c|c|c|c| }
  \hline
  \textbf{DeepONet} & Train set & Test set \\
  \hline
  $n=100$ & $0.004$ & $0.23$ \\
  $n=500$& $0.006$ & $0.13$ \\
  $n=2000$& $0.004$ & $0.06$ \\
  \hline
  \end{tabular}
  \caption{Mean relative $L^2$-losses after training for the nonlinear disturbed pendulum equation~\eqref{eq:NPEN}.}
  \label{table:NPEN}
\end{table}

\begin{figure}[H]
  \begin{tblr}{|m{3cm}|m{3cm}|m{3cm}|m{3cm}|}
  \hline
  Neural SVE: \newline Training set & Neural SVE: \newline Test set & DeepONet: \newline Training set & DeepONet: \newline Test set\\
  \hline
  \includegraphics[width=2.8cm, height=5cm]{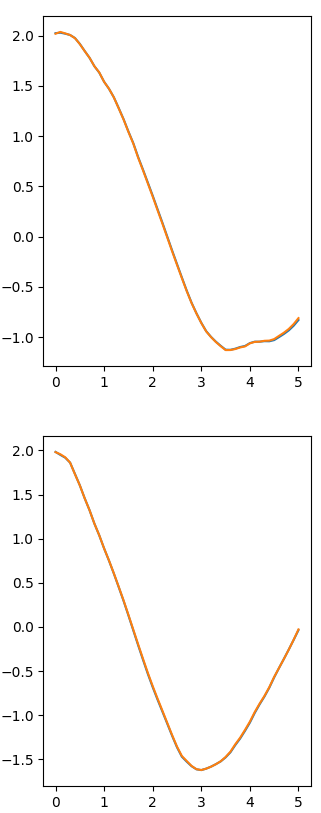} & \includegraphics[width=2.8cm, height=5cm]{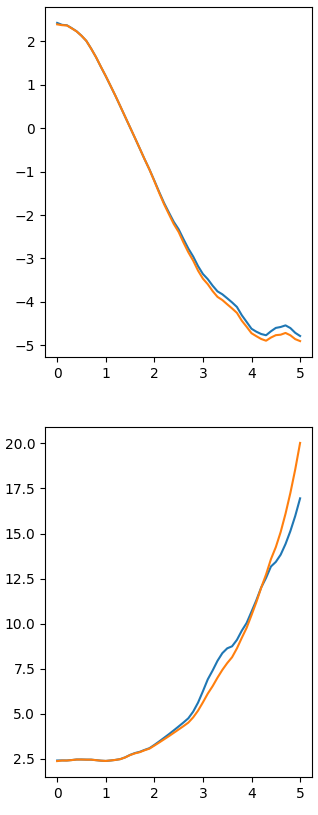} & \includegraphics[width=2.8cm, height=5cm]{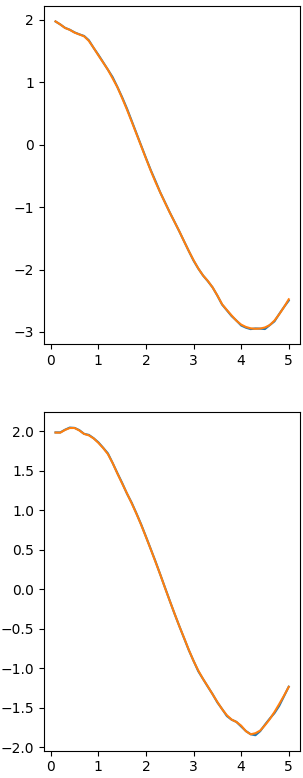} & \includegraphics[width=2.8cm, height=5cm]{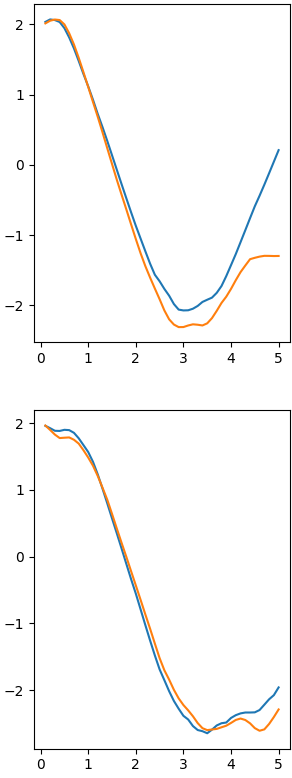}\\
  \hline
  \end{tblr}
  \caption{Sample neural SVE and DeepONet paths from the training and the test set for the disturbed pendulum equation with nonlinear drift and $n=100$. Blue are the original paths and orange the learned approximations.}
  \label{figure:paths_NPEN}
\end{figure}

\subsection{Rough Heston equation}

The rough Heston model is one of the most prominent representatives of rough volatility models in mathematical finance, see e.g. \cite{ElEuch2019,AbiJaberElEuch2019b}, where the volatility process is modeled by SVEs with the singular kernels $(t-s)^{-\alpha}$ for some $\alpha\in(0,1/2)$, that is
\begin{equation*}
  V_t = V_0 +\frac{1}{\Gamma(\alpha)}\int_0^t (t-s)^{-\alpha}\lambda(\theta-V_s)\dd s+\frac{\lambda\nu}{\Gamma(\alpha)}\int_0^t (t-s)^{-\alpha}\sqrt{|V_s|}\dd B_s,\quad t\in[0,T],
\end{equation*}
where $\Gamma(x)=\int_0^\infty t^{x-1}e^{-t}\dd t$ denotes the real valued Gamma function, and $\lambda,\theta,\nu\in\R$. As specific example, we consider the one-dimensional equation
\begin{equation}\label{eq:exp_rh}
  V_t = \xi + \frac{1}{\Gamma(0.4)}\int_0^t (t-s)^{-0.4}(2-V_s)\dd s +\frac{1}{\Gamma(0.4)}\int_0^t (t-s)^{-0.4}\sqrt{|V_s|}\dd B_s,\qquad t\in[0,T],
\end{equation}
with the target to learn its dynamics by neural SVEs and DeepONet. The results are presented in Table~\ref{table:rh}. Neural SVEs outperform DeepONet here by far.

\begin{table}[H]
  \begin{tabular}{ |c|c|c|c| }
  \hline
  \textbf{Neural SVE} & Train set & Test set \\
  \hline
  $n=100$ & $0.003$ & $0.003$ \\
  $n=500$& $0.0025$ & $0.0028$ \\
  $n=2000$& $0.0015$ & $0.0017$ \\
  \hline
  \end{tabular}
  \hspace*{2pt}
  \begin{tabular}{ |c|c|c|c| }
  \hline
  \textbf{DeepONet} & Train set & Test set \\
  \hline
  $n=100$ & $0.035$ & $0.13$ \\
  $n=500$& $0.004$ & $0.037$ \\
  $n=2000$& $0.003$ & $0.014$ \\
  \hline
  \end{tabular}
  \caption{Mean relative $L^2$-losses after training for the rough Heston equation~\eqref{eq:exp_rh}.}
  \label{table:rh}
\end{table}

Sample paths of the training and the testing sets together with their learned approximations are shown in Figure~\ref{figure:paths_rh}.

\begin{figure}[H]
  \begin{tblr}{|m{3cm}|m{3cm}|m{3cm}|m{3cm}|}
  \hline
  Neural SVE: \newline Training set & Neural SVE: \newline Test set & DeepONet: \newline Training set & DeepONet: \newline Test set\\
  \hline
  \includegraphics[width=2.8cm, height=5cm]{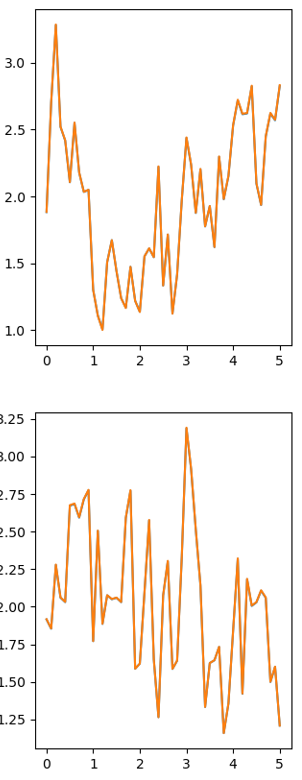} & \includegraphics[width=2.8cm, height=5cm]{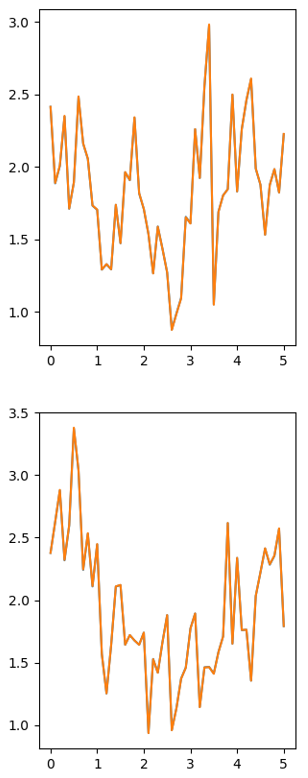} & \includegraphics[width=2.8cm, height=5cm]{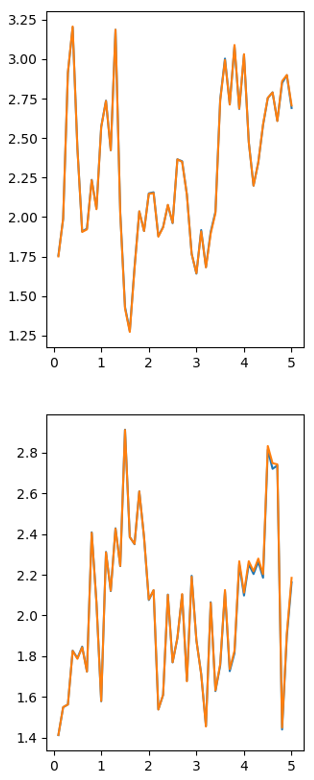} & \includegraphics[width=2.8cm, height=5cm]{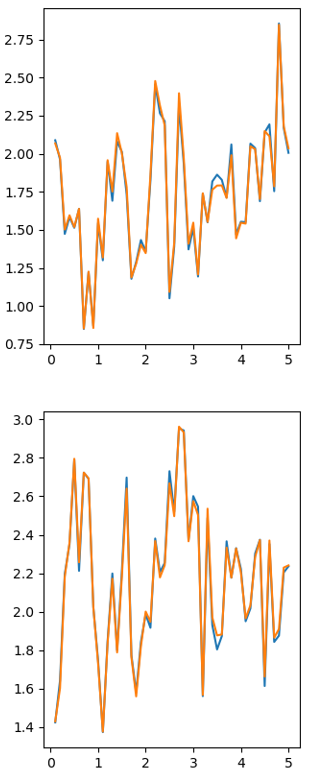}\\
  \hline
  \end{tblr}
  \caption{Sample neural SVE and DeepONet paths from the training and the test set for the rough Heston equation and $n=2000$. Blue (barely visible) are the original paths and orange the learned approximations.}
  \label{figure:paths_rh}
\end{figure}

\subsection{Generalized Ornstein--Uhlenbeck process}

The Ornstein--Uhlenbeck process, introduced in \cite{Uhlenbeck1930}, is a commonly used stochastic process with applications in finance, physics or biology, see e.g. \cite{Vasicek2012,Tuzel1999,Martins1994}. We consider the generalized Ornstein--Uhlenbeck process that is given by the stochastic differential equation
\begin{equation*}
  \dd X_t = \theta (\mu(t,X_t)-X_t)\dd t + \sigma(t,X_t)\dd B_t, \quad t\in[0,T],
\end{equation*}
which, using It{\^o}'s formula, can be equivalently rewritten as the SVE
\begin{equation*}
  X_t = X_0 e^{-\theta t} + \theta\int_0^t e^{-\theta (t-s)}\mu(s,X_s)\dd s + \int_0^t e^{-\theta (t-s)}\sigma(s,X_s)\dd B_s,\quad t\in[0,T].
\end{equation*}
As prototyping example, we consider the one-dimensional equation
\begin{equation}\label{eq:exp_ou}
  X_t= \xi e^{-t} + \int_0^t e^{-(t-s)}X_s\dd s +\int_0^t e^{-(t-s)}\sqrt{|X_s|}\dd B_s,\qquad t\in[0,T],
\end{equation}
with the target to learn its dynamics by neural SVEs and DeepONet. The results are presented in Table~\ref{table:ou}.

\begin{table}[H]
  \begin{tabular}{ |c|c|c|c| }
  \hline
  \textbf{Neural SVE} & Train set & Test set \\
  \hline 
  $n=100$ & $0.015$ & $0.038$ \\
  $n=500$& $0.014$ & $0.036$ \\
  $n=2000$& $0.014$ & $0.02$ \\
  \hline
  \end{tabular}
  \hspace*{2pt}
  \begin{tabular}{ |c|c|c|c| }
  \hline
  \textbf{DeepONet} & Train set & Test set \\
  \hline
  $n=100$ & $0.025$ & $0.23$ \\
  $n=500$& $0.018$ & $0.15$ \\
  $n=2000$& $0.028$ & $0.12$ \\
  \hline
  \end{tabular}
  \caption{Mean relative $L^2$-losses after training for the one-dimensional Ornstein--Uhlenbeck equation~\eqref{eq:exp_ou}.}
  \label{table:ou}
\end{table}

Sample paths of the training and the testing sets together with their learned approximations are shown in Figure~\ref{figure:paths_ou}.

\begin{figure}[H]
  \begin{tblr}{|m{3cm}|m{3cm}|m{3cm}|m{3cm}|}
  \hline
  Neural SVE: \newline Training set & Neural SVE: \newline Test set & DeepONet: \newline Training set & DeepONet: \newline Test set\\
  \hline
  \includegraphics[width=2.8cm, height=5cm]{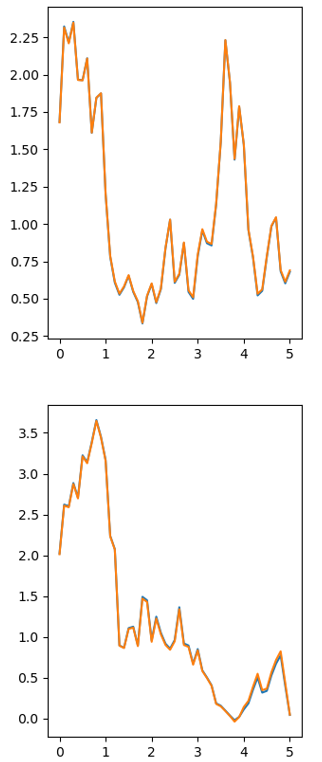} & \includegraphics[width=2.8cm, height=5cm]{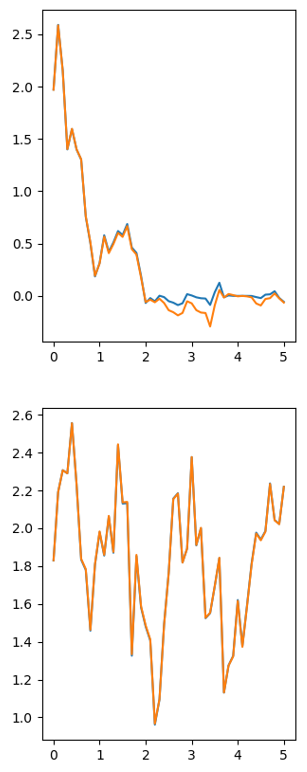} & \includegraphics[width=2.8cm, height=5cm]{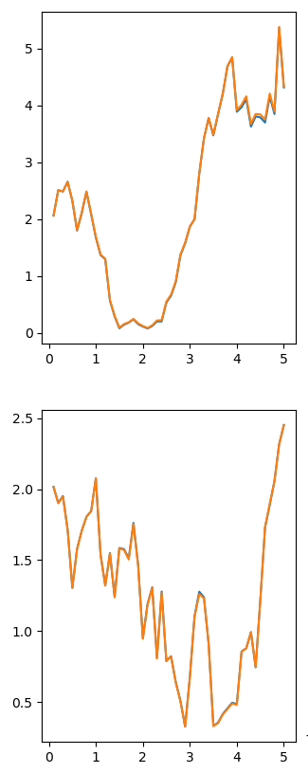} & \includegraphics[width=2.8cm, height=5cm]{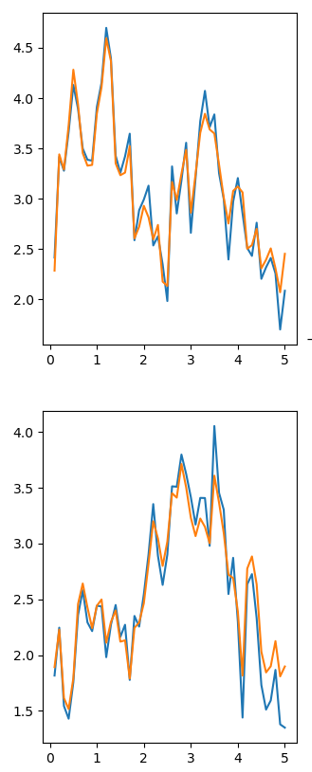}\\
  \hline
  \end{tblr}
  \caption{Sample neural SVE and DeepONet paths from the training and the test set for the one-dimensional Ornstein--Uhlenbeck equation and $n=500$. Blue (barely visible) are the original paths and orange the learned approximations.}
  \label{figure:paths_ou}
\end{figure}

Moreover, neural SVEs are able to learn multi-dimensional SVEs. As an example, we consider the two-dimensional equation 
\begin{equation}
  \begin{pmatrix}
  X^1_t\\X_t^2
  \end{pmatrix} =
  \begin{pmatrix}
  \xi_1\\
  \xi_2
  \end{pmatrix}e^{-t} + \int_0^t e^{-(t-s)}\begin{pmatrix}
  X^1_s\\X_s^2
  \end{pmatrix}\dd s
  +\int_0^t e^{-(t-s)}\begin{pmatrix}
  \sqrt{|X^1_s|},0\\0,\sqrt{|X_s^2|}
  \end{pmatrix}\dd B_s,\qquad t\in[0,T],\label{eq:exp_ou_2d}
\end{equation}
where $B$ is a $2$-dimensional Brownian motion, and try to learn its dynamics by neural SVEs. The results are presented in Table~\ref{table:ou_2d}.

\begin{table}[H]
  \begin{tabular}{ |c|c|c|c| }
  \hline
  \textbf{Neural SVE} & Train set & Test set \\
  \hline
  $n=100$ & $0.038$ & $0.095$ \\
  $n=500$& $0.04$ & $0.085$ \\
  $n=2000$& $0.038$ & $0.04$ \\
  \hline
  \end{tabular}
  \caption{Mean relative $L^2$-losses after training for the two-dimensional Ornstein--Uhlenbeck equation \eqref{eq:exp_ou_2d}.}
  \label{table:ou_2d}
\end{table}

Sample paths of the training and the testing sets together with their learned approximations are shown in Figure~\ref{figure:paths_ou_2d}.

\begin{figure}[H]
  \begin{tblr}{|m{6cm}|m{6cm}|}
  \hline
  Neural SVE: Training set & Neural SVE: Test set \\
  \hline
  \includegraphics[width=6cm, height=5cm]{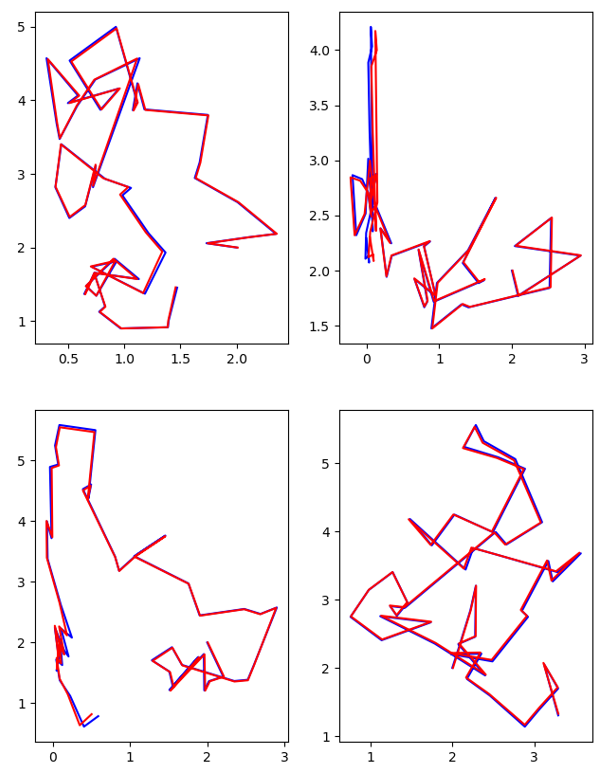} & \includegraphics[width=6cm, height=5cm]{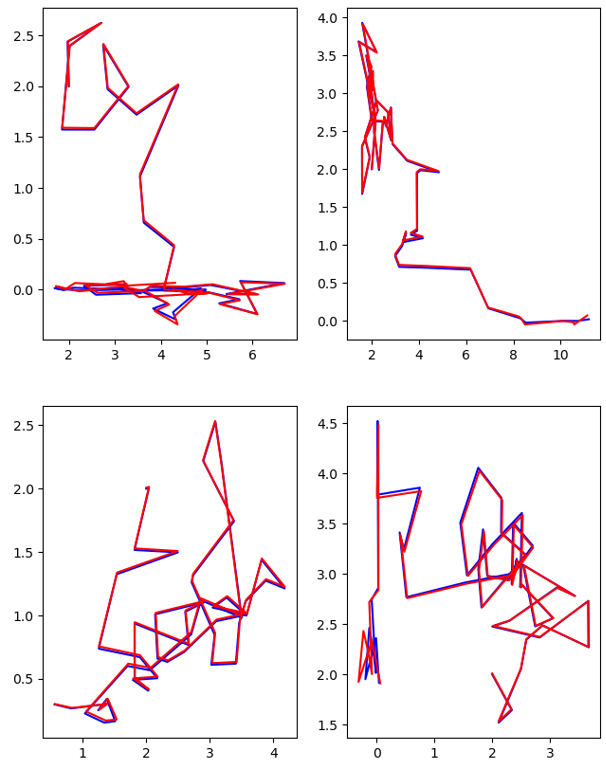} \\
  \hline
  \end{tblr}
  \caption{Sample neural SVE paths from the training and the test set for the two-dimensional Ornstein--Uhlenbeck equation and $n=2000$. Blue (barely visible) are the original paths and red the learned approximations.}
  \label{figure:paths_ou_2d}
\end{figure}

\subsection{Monetary reserve modelling}

For a higher-dimensional model with cross-dependency between the different trajectories we simulate and approximate the bank run model as in \cite{Carmona2018c}. There, the dynamics of the log-monetary reserves of $N$ banks are modelled by the coupled diffusion processes $X^i$, $i = 1 , \ldots, N$,
\begin{equation}\label{eq:br}
	\d X_t^i = (\alpha^i_t - \alpha_{t - \tau}^i ) \dd t + \sigma \dd W_t^i, \qquad 0 \leq t \leq T,
\end{equation}
where $W^i$, $i = 1, \ldots, N$, are independent standard Brownian motioins, and the rate of borrowing or lending $\alpha_t^i$ represents the control of the bank $i$ on the system. Vice versa, the delayed control $\alpha_{t-\tau}^i$ represents the repayments after a fixed time $\tau$. In the paper mentioned the authors solve the differential game where bank $i$, $i = 1, \ldots,  N$, aims to minimize its objective function
\begin{equation*}
	J^i(\alpha) = \bE\Big[\int_0^T f_i(X_t,\alpha_t^i) \dd t + g_i(X_T)\Big]
\end{equation*}
with $f_i(x, \alpha^i)$ and $g_i(x,\alpha^i)$ heavily depending on $\frac{1}{N} \sum_{j=1}^N x_j - x_i$. The optimal control $\alpha^\ast$ can only be stated in terms of multiple differential equations with no closed-form solution, therefore we choose
\begin{equation*}
	\alpha_t^i = \Big(0.1 + 0.5 \sin\Big(\frac{\pi t}{2T}\Big)\Big) (\bar{X}_t - X_t^i).
\end{equation*}
Here, we choose $X_0^i \sim\mathcal{N}(10,1)$, $\sigma= 0.05$, $T=50$, $\Delta t = 1$ and $\tau = 10$, opening up to the interpretation of each time unit corresponding to one day and borrowing (lending) decisions being made on a daily basis. Note that this problem is highly complex since $\bar{X}$ is not an input to the neural network, so the network has to learn how $X^i$ depends on $X^j$, $j=1, \ldots, N$, itself. To account for the higher complexity we opted for a network with latent dimension $d_h=d_K=24$ and used $n=1000$ datasets with $10$ banks each.

As shown in Table~\ref{table:br}, the neural SVE learned the dynamics quite well also in this example.

\begin{table}[H]
  \begin{tabular}{ |c|c|c|c| }
  \hline
  \textbf{Neural SVE} & Train set & Test set\\
  \hline
  $n=1000$ & $0.037$ & $0.039$ \\
  \hline
  \end{tabular}
  \caption{Mean relative $L^2$-losses after training for the monetary reserve model~\eqref{eq:br}.}
  \label{table:br}
\end{table}

\begin{figure}[H]
  \begin{tblr}{|m{14cm}|}
  \hline
  Training set\\
  \hline
  \includegraphics[width=14cm]{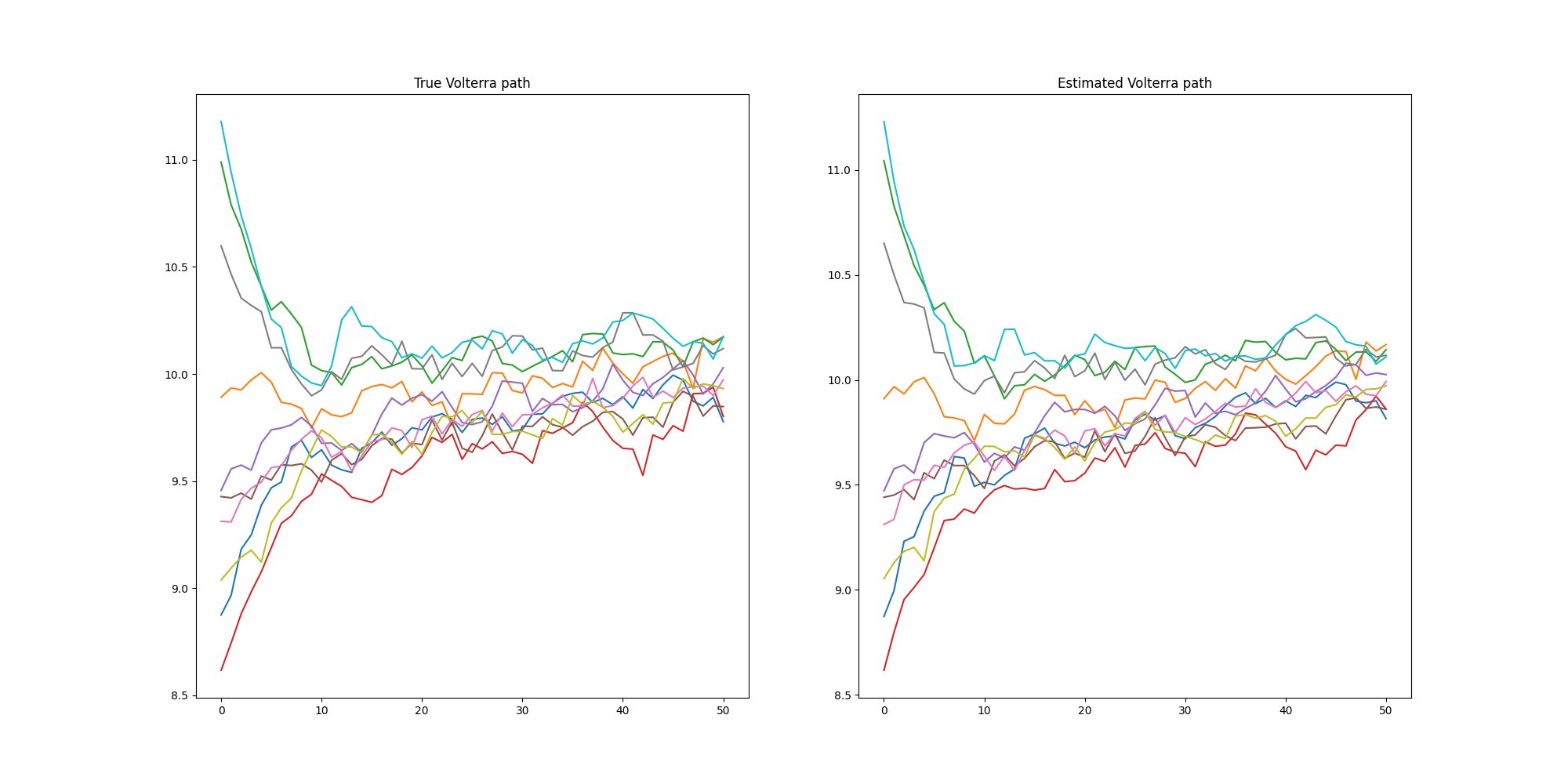}\\
  \hline
  Test set\\
  \hline
  \includegraphics[width=14cm]{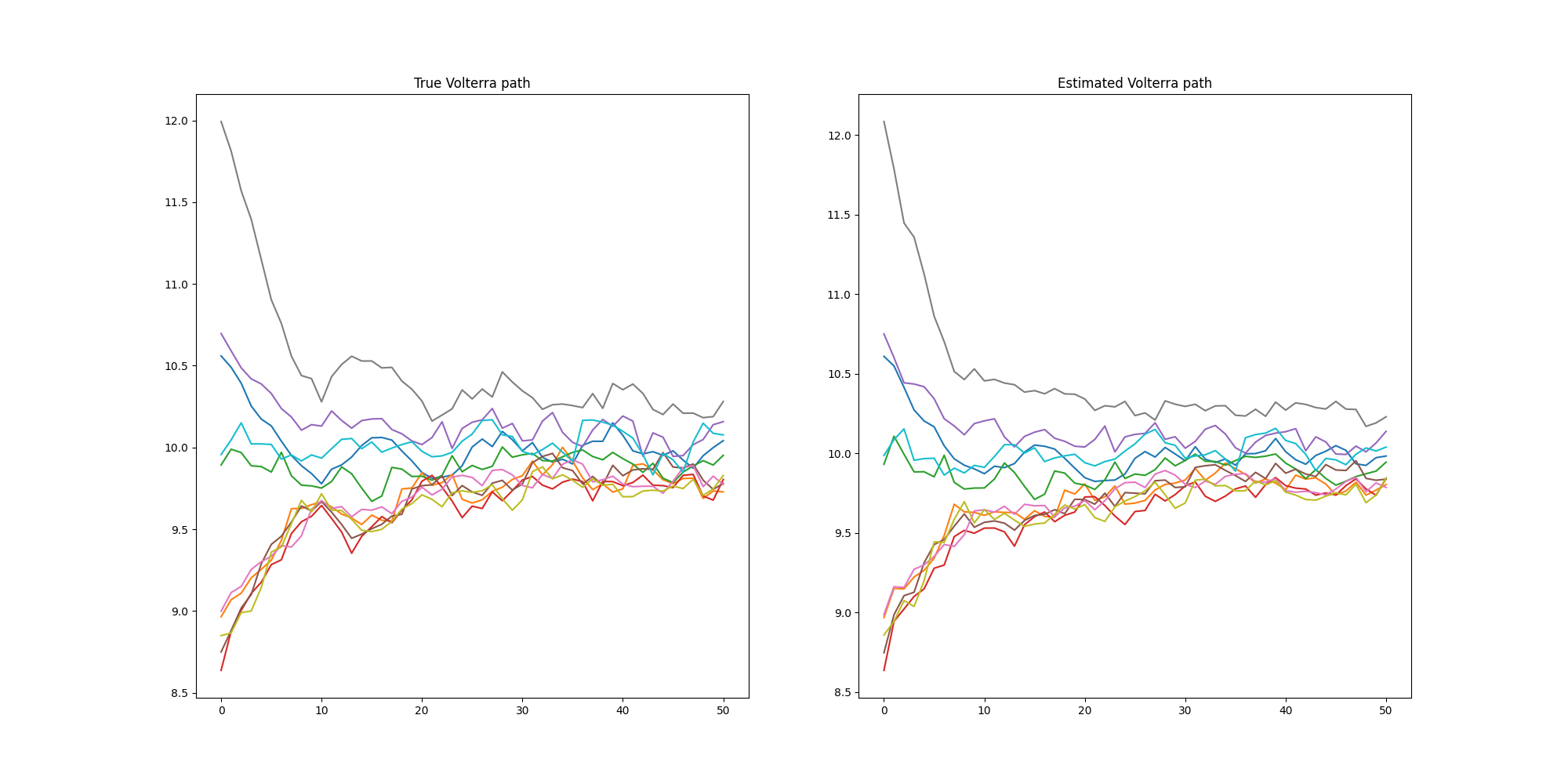} \\
  \hline
  \end{tblr}
  \caption{Sample paths from the training and test sets for the monetary reserve model. On the left side one can see the paths as from the data set, on the right side the learnt approximation by the SVE. The same color corresponds to the same bank.}
  \label{figure:paths_br}
\end{figure}

In Figure~\ref{figure:paths_br} one can see the log-monetary reserves of the $10$ banks from one data set and its estimation from the test and the training set each. The dips when the first borrowing contracts end after $t=10$ are not as pronounced in the network's approximation as in the original data set. This likely stems from neural networks struggling to learn indicator functions such as $K_\mu$ here. Still, the order of the banks' reserves as well as their magnitude are captured very well.

\subsection{Comparison to neural SDEs}\label{subsec: neural SDEs}

Introduced in \cite{Kidger2022}, a \textit{neural stochastic differential equation} (neural SDE) is defined by
\begin{align}
  Z_0 &= L_\theta(\xi),\notag\\
  Z_t &= Z_0\, g_\theta(t)+\int_0^t \mu_\theta(s,Z_s)\dd s + \int_0^t \sigma_\theta(s,Z_s)\dd B_s,\notag\\
  X_t&=\Pi_\theta(Z_t),\quad t\in [0,T],\notag
\end{align}
where all objects are defined as in the neural SVE~\eqref{eq:neuralSVE}. Since a neural SDE does not possess the kernel functions $K_{\mu,\theta}$ and $K_{\sigma,\theta}$ compared to the neural SVE~\eqref{eq:neuralSVE}, it is not able to fully capture the dynamics induced by SVEs.

\smallskip

Note that due to the need of discretizing the time interval when it comes to computations,  some of the properties introduced by the kernels are attenuated. However, the memory structure of an SVE is a property which can be learned by a neural SVE but, in general, not by a neural SDE since SDEs posses the Markov property. Therefore, to see the potential capabilities of neural SVEs compared to neural SDEs, it is best to look at examples where the dependency on the whole path plays a crucial role. To construct such an example, we consider the kernels
\begin{equation*}
  K_\mu(s,t):=K_\sigma(s,t):=K(t-s)=\begin{cases}1,\quad &\text{if }(t-s)\leq T/4,\\
  -1,\quad &\text{if }(t-s)>T/4,
  \end{cases}
\end{equation*}
and aim to learn the dynamics to the one-dimensional SVE
\begin{equation}\label{eq:RH_longrun}
  X_t = \xi + \int_0^t K(t-s)(2-X_s)\dd s +\int_0^t K(t-s)\sqrt{|X_s|}\dd B_s,\qquad t\in[0,T],
\end{equation}
where $\xi\sim \mathcal{N}(5,0.5)$ and $T=5$. The process $(X_t)_{t\in[0,5]}$ is expected to decrease in the first quarter of the interval $[0,5]$ where $K(t-s)=1$ holds due to the mean-reverting effect of the drift coefficient $\mu(s,x)=2-x$, then something unpredictable will happen and finally in the last part of the interval $t\in [0,5]$ where the kernels attain $-1$ for a large proportion of $s\in[0,t]$, the process might become big due to the turning sign in the drift. Hence, it is expected that the path dependency will have a substantial impact.

We learn the dynamics of equation~\eqref{eq:RH_longrun} simulated on an equally-sized grid with grid size $\Delta t=0.1$  by a neural SDE and by a neural SVE for a dataset of size $n=500$ and compare the results in Table~\ref{table:rh_longrun}. It can be observed that the neural SDE fails to learn the dynamics of \eqref{eq:RH_longrun} properly while the neural SVE performs well.

\begin{table}[H]
  \begin{tabular}{ |c|c|c|c| }
  \hline
  \textbf{Neural SVE} & Train set & Test set \\
  \hline
  $n=500$& $0.008$ & $0.009$ \\
  \hline
  \end{tabular}
  \hspace*{2pt}
  \begin{tabular}{ |c|c|c|c| }
  \hline
  \textbf{Neural SDE} & Train set & Test set \\
  \hline
  $n=500$& $0.19$ & $0.21$ \\
  \hline
  \end{tabular}
  \caption{Mean relative $L^2$-losses after training for the SVE~\eqref{eq:RH_longrun}.}
  \label{table:rh_longrun}
\end{table}

Sample paths of the training and the testing sets together with their learned approximations are shown in Figure~\ref{figure:paths_rh_longrun}.

\begin{figure}[H]
  \begin{tblr}{|m{3cm}|m{3cm}|m{3cm}|m{3cm}|}
  \hline
  Neural SVE: \newline Training set & Neural SVE: \newline Test set & Neural SDE: \newline Training set & Neural SDE: \newline Test set\\
  \hline
  \includegraphics[width=3.15cm, height=8cm]{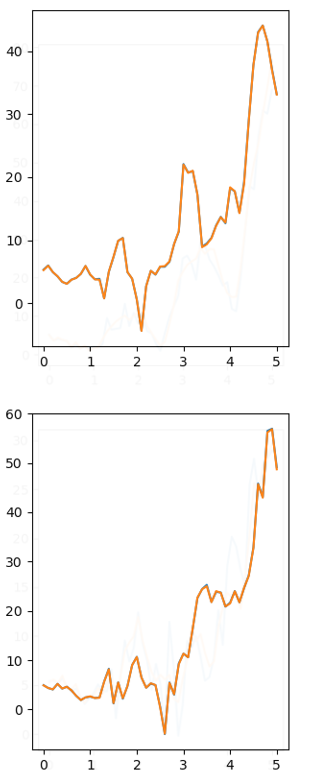} & \includegraphics[width=3.15cm, height=8cm]{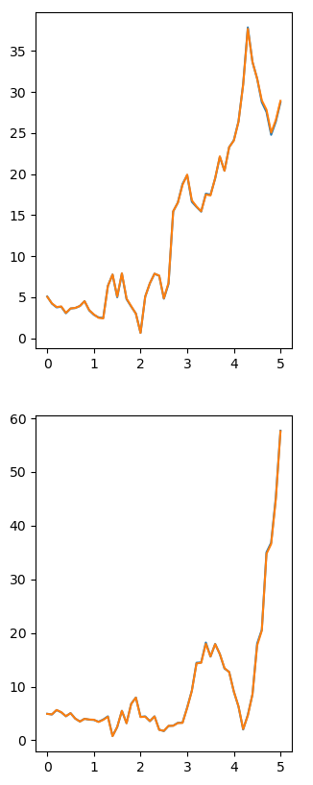} & \includegraphics[width=3.15cm, height=8cm]{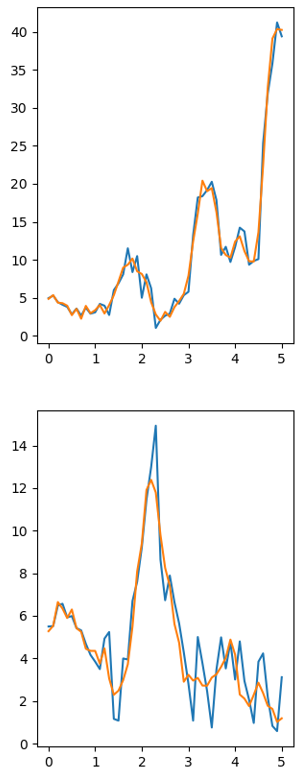} & \includegraphics[width=3.15cm, height=8cm]{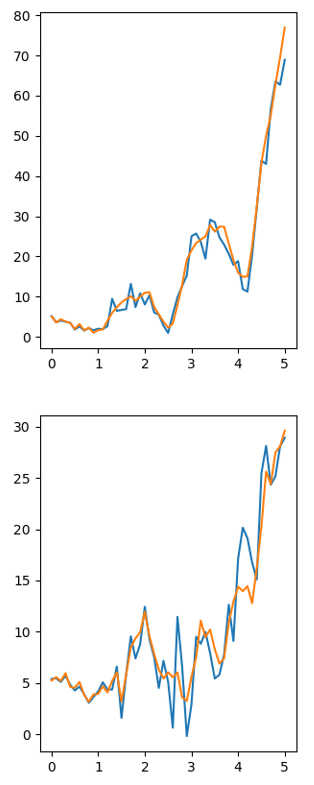}\\
  \hline
  \end{tblr}
  \caption{Sample neural SVE and neural SDE paths from the training and the test set for the SVE~\eqref{eq:RH_longrun} and $n=500$. Blue are the original paths and orange the learned approximations.}
  \label{figure:paths_rh_longrun}
\end{figure}

\subsection{Computational aspects}

Next we briefly analyse computational aspects of the Neural SVE, most particularly, its runtime and memory usage. For each property we outline the influence of the number of epochs, the grid size $\Delta t$, the terminal time $T$, the dimension of the SVE, the latent dimension and the sample-size~$n$.

\medskip

All computations were made using an AMD Ryzen 7 5800X processor. Using CUDA with the NVIDIA GeForce RTX 3060 roughly triples the runtime. This is likely due to the relatively small size, especially the small width, of the neural network. Note that for a one-dimensional SVE there are 1264 parameters to be trained, for a two-dimensional SVE there are 1901 parameters. These parameters are basically split into five different neural networks (one for each kernel and coefficient as well as one for $g$). Also solving an SVE cannot be parallelized due to the past-dependency of the solution. 

As base case let us take the parameters as in the beginning of this section with $n=500$, a batch size of $50$ and $1000$ epochs. Runtime and memory usage scale linearly in the sample size. During the training, the runtime per iteration remains roughly constant. Hence, the runtime grows linearly in the number of epochs. The memory usage is basically independent of the number of epochs. We therefore focus on the runtime per epoch. The results are reported in Table \ref{table:comp_perf}. Doubling the latent dimension from 12 to 24 increases the number of parameters in the one-dimensional model to 4540. The impact to the runtime is relatively low, it increases to barely 17 seconds. A two-dimensional model with latent dimension 24 has 6965 parameters. Only when rapidly increasing the latent dimension one can see a significant effect. The very small dependency of runtime and memory on the number of parameters to be learned indicates that the main effort lies in solving the SVE numerically. In order to improve performance of Neural SVE one should focus on finding a more efficient numerical scheme to solve SVEs in the first place.

\begin{table}[H]
	\begin{tabular}{|l|crr|}
	\hline
	\multirow{2}{*}{} & \multicolumn{3}{c|}{Latent dimension = 12}                                                                 \\ \cline{2-4} 
	                  & \multicolumn{1}{c|}{Parameters} & \multicolumn{1}{c|}{Training time per it.} & \multicolumn{1}{c|}{Memory} \\ \hline
	dim = 1           & \multicolumn{1}{r|}{1264}       & \multicolumn{1}{r|}{16,04}                 & 1233                        \\ \hline
	dim = 2           & \multicolumn{1}{r|}{1901}       & \multicolumn{1}{r|}{15,91}                 & 1232                        \\ \hline
	\multirow{2}{*}{} & \multicolumn{3}{c|}{Latent dimension = 24}                                                                 \\ \cline{2-4} 
	                  & \multicolumn{1}{c|}{Parameters} & \multicolumn{1}{c|}{Training time per it.} & \multicolumn{1}{c|}{Memory} \\ \hline
	dim = 1           & \multicolumn{1}{r|}{4540}       & \multicolumn{1}{r|}{16,08}                 & 1231                        \\ \hline
	dim = 2           & \multicolumn{1}{r|}{6965}       & \multicolumn{1}{r|}{16,35}                 & 1233                        \\ \hline
	\multirow{2}{*}{} & \multicolumn{3}{c|}{Latent dimension = 120}                                                                \\ \cline{2-4} 
	                  & \multicolumn{1}{c|}{Parameters} & \multicolumn{1}{c|}{Training time per it.} & \multicolumn{1}{c|}{Memory} \\ \hline
	dim = 1           & \multicolumn{1}{r|}{103324}     & \multicolumn{1}{r|}{18,70}                 & 1234                        \\ \hline
	dim = 2           & \multicolumn{1}{r|}{161525}     & \multicolumn{1}{r|}{20,09}                 & 1263                        \\ \hline
	\end{tabular}
	\caption{Computational performance of neural networks trained to model stochastic Volterra equations. Reported are the number of parameters, average training time per iteration (in seconds), and peak memory usage (in MB) for varying input dimensions (dim) and latent dimensions of the network.}
  \label{table:comp_perf}
\end{table}

Changing the grid size $\Delta t$ has an influence that is counterintuitive at first glance: Halving the grid size quadruples the runtime. Obviously, for a fixed terminal time $T$, one has to evaluate at (nearly) twice as many evaluation points. But the numerical scheme for solving an SVE requires the whole past, that now also contains twice as many points to consider. Doubling the terminal time $T$ while keeping the grid size $\Delta t$ fixed has the same impact.

\begin{remark} 
  Our procedure corresponds to a discretise-then-optimize approach in classical Neural SDE. For Neural SDEs one may also consider an optimize-then-discretize approach. The optimize-then-discretize approach requires less memory but it is slower and and may lead to an inaccurate solution \cite{Kidger2022}. Since the optimize-then-discretize approach transforms the SDE into an backward SDE for the backpropagation, it is infeasible for Neural SVE due to their non-markovian structure.
\end{remark}

\appendix
\section{Proofs of Theorem~\ref{thm: stability of SVE} and Proposition~\ref{prop: stability SVE}}\label{sec: appendix}

In this appendix, we present the proofs Theorem~\ref{thm: stability of SVE} and of Proposition~\ref{prop: stability SVE}.

\begin{proof}[Proof of Theorem~\ref{thm: stability of SVE}]
  We provide a proof by contradiction. To that end, we assume that there are $\delta > 0$ and an increasing sequence $(n_k)_{k \in \bN} \subset \bN$ satisfying
  \begin{equation*}
    \inf_{k \in \bN} \bE\bigg[ \sup_{t \in [0,T]} |X^{n_k}_t-X_t|^2\bigg] \geq \delta.
  \end{equation*}
  Moreover, we define
  \begin{equation*}
    A^n_t := \int_0^t \mu_n(s,X^{n}_s)\dd s \quad\text{and}\quad
    M^n_t := \int_0^t \sigma_n(s,X^n_s) \dd s
  \end{equation*}
  for $t \in [0,T]$, $n \in \bN$.

  As in the proof of \cite[Lemma~3.8]{Promel2022}, we can obtain the tightness of probability measure
  \begin{equation*}
    \bP_{X, X^{n_k}, A^{n_k}, M^{n_k}, B}, \quad k \in \N,
  \end{equation*}
  which denotes the probability distribution of the corresponding random vector
  \begin{equation*}
    (X, X^{n_k}, A^{n_k}, M^{n_k}, B).
  \end{equation*}
  Using Prokhorov's theorem and the Skorokhod representation theorem, one deduces that there is a probability space $(\hat{\Omega}, \hat{\mathcal{F}}, \hat{\bP})$ with continuous stochastic processes $\hat{X}^l, \hat{Y}^l, \hat{B}^l, \hat{A}^l, \hat{M}^l$, $l \in \bN$ and $\hat{X}, \hat{Y}, \hat{B}, \hat{A}, \hat{M}$ such that
  \begin{equation*}
    \big(\hat{\xi}^l,\hat{X}^l, \hat{Y}^l, \hat{B}^l, \hat{A}^l, \hat{M}^l\big) \stackrel{\mathscr{D}}{\sim} \big(\xi,X, X^{n_{k_l}},B,A^{n_{k_l}}, M^{n_{k_l}}\big), \qquad l \in \bN,
  \end{equation*}
  and
  \begin{equation*}
   (\hat{X}^l, \hat{Y}^l, \hat{B}^l, \hat{A}^l, \hat{M}^l) \to (\hat{X}, \hat{Y}, \hat{B}, \hat{A}, \hat{M})
  \end{equation*}
  in $C([0,T]; \bR^d \times \bR^d \times \bR^m \times \bR^d \times \bR^d)$ as $l \to \infty$, $\hat{\bP}$-a.s., and $\hat{\xi}^l \to \hat{\xi}$ as $l \to \infty$, $\hat{\bP}$-a.s.\footnote{One may drop the $\xi, \hat{\xi}^l$, $l \in \bN$, here, as they are uniquely determined by the $X$, $\hat{X}^l$, respectively.} With $\stackrel{\mathscr{D}}{\sim}$ we denote equality in law. From here on we identify any space of continuous functions with the supremum norm.

  Applying Fatou's Lemma, we obtain
  \begin{align*}
    \begin{split}
    \delta & \leq \liminf_{k \to \infty}
    \bE\bigg[ \sup_{t \in [0,T]} | X^{n_k}_t-X_t|^2\bigg]\\
    & \leq \liminf_{l \to \infty} \bE_{\hat{\bP}} \bigg[\sup_{t \in [0,T]} |\hat{Y}^l_t-\hat{X}^l_t|^2\bigg]\\
    & \leq \bE_{\hat{\bP}} \bigg[\limsup_{l \to \infty} \sup_{t \in [0,T]} |\hat{Y}^l_t-\hat{X}^l_t|^2\bigg]\\
    & = \bE_{\hat{\bP}} \bigg[\sup_{t \in [0,T]} |\hat{Y}_t-\hat{X}_t|^2\bigg].
    \end{split}
  \end{align*}
  We check that $(\hat{X},\hat{Z})$ with $\hat{Z}:=\hat{A}+\hat{M}$, $(\hat{\Omega},\hat{\mathcal{F}},\hat{\bP})$, $(\hat{\mathcal{F}}_t)_{t\in [0,T]}$ solves the Volterra local martingale problem \cite[Definition~2.4]{Promel2022} given $(\xi g,\mu,\sigma, K_\mu, K_\sigma)$. Then, by \cite[Lemma~2.7]{Promel2022} $\hat{Y}$ is a solution of the SVE~\eqref{eq:SVE}. Conditions (i)-(iii) of \cite[Definition~2.4]{Promel2022} are clear. (iv) of \cite[Definition~2.4]{Promel2022} follows as in the proof of \cite[Lemma~3.9]{Promel2022}. We only need to note that the definitions and results of aforementioned paper \cite{Promel2022} extend verbatim to the slightly more general present setting.

  To show (v) of \cite[Definition~2.4]{Promel2022} we introduce the processes
  \begin{equation*}
    Z^l := A^{n_{k_l}} + M^{n_{k_l}} \quad \text{and}\quad \hat{Z}^l = \hat{A}^l + \hat{M}^l,\quad l \in \bN.
  \end{equation*}
  Since $(\hat{Y}^l, \hat{M}^l) \stackrel{\mathscr{D}}{\sim} (X^{n_{k_l}},M^{n_{k_l}})$, for every $l \in \bN,$ and pathwise uniqueness holds by assumption a general version of the Yamada--Watanabe result (see, e.g., \cite{Kurtz2014}) shows that we may express $\hat{Y}^l$ as solution of
  \begin{equation}\label{eq:Y_as_Yama_Wata}
    \hat{Y}^l_t = \hat{\xi}^l g_{k_l}(t) + \int_0^t K_{\mu,n_{k_l}}(t-s) \mu_{n_{k_l}}(s,\hat{Y}_s^l) \dd s + \int_0^t K_{\sigma,n_{k_l}}(t-s) \dd \hat{M}^l_s, \qquad t \in [0,T].
  \end{equation}
  We know that $\hat{Y}^l \to \hat{Y}$ and that $\hat{\xi}^l g_{k_l} \to \hat{\xi} g$ $\hat{\bP}$-a.s. By $\hat{\xi}^l \stackrel{\mathscr{D}}{\sim} \xi$, $l \in \bN,$ we can conclude $\hat{\xi} \stackrel{\mathscr{D}}{\sim} \xi$. Next we show
  \begin{equation}\label{eq:conv_A}
    \Big(\int_0^t K_{\mu,n_{k_l}}(t-s) \dd \hat{A}^l_s \Big)_{t \in [0,T]} \stackrel{\hat{\bP}}{\longrightarrow} \Big(\int_0^t K_{\mu}(t-s) \dd \hat{A}_s \Big)_{t \in [0,T]}.
  \end{equation}
  Therefore, let $\eta, \phi>0$ be arbitrary but fixed. Denoting $\bar{K}:=\int_0^T |K_\mu(s)|\dd s$, we choose $N_1 \in \bN$ and $L_1 \in \bN$ sufficiently large, such that
  \begin{equation*}
    \hat{\bP}\Big(\|\hat{Y}\|_\infty \geq \frac{N_1}{2}\Big) \leq \frac{\phi}{3}, \qquad \hat{\bP}\Big(\|\hat{Y}^l-\hat{Y}\|_\infty \geq \max\Big(\frac{\eta}{3 C_{\mu,\sigma} \bar{K}}, \frac{N_1}{2}\Big)\Big) \leq \frac{\phi}{3}
  \end{equation*}
  for all $l \geq L_1$. On $\{\| \hat{Y}\|_\infty \vee \|\hat{Y}^l\|_\infty \leq N_1\}$, we have
  \begin{align*}
    &|G^l_t-G_t| \\
    &\quad:= \Big|\int_0^t K_{\mu,n_{k_l}}(t-s) \mu_{n_{k_l}}(s,\hat{Y}^l_s) \dd s - \int_0^t K_{\mu}(t-s) \mu_{n}(s,\hat{Y}^l_s) \dd s \Big| \\
    & \quad\leq \Big|\int_0^t (K_{\mu,n_{k_l}}(t-s) - K_{\mu}(t-s))\mu_{n_{k_l}}(s,\hat{Y}^l_s) \dd s \Big| \\
    & \quad\qquad + \int_0^t |K_\mu(s)| \dd s  \Big(\sup_{s \in [0,T]} \sup_{x \in [-N_1,N_1]} |\mu_{n_{k_l}}(s,x)-\mu(s,x)| + \sup_{s \in [0,T]} |\mu(s,\hat{Y}^l_s)-\mu(s,\hat{Y}_s)|\Big)\\
    & \quad\leq C_{\mu,\sigma} (1 + N_1) \int_0^t |K_{\mu,n_{k_l}}(s) - K_{\mu}(s)|\dd s\\
    &\quad \qquad + \bar{K} \Big( \sup_{s \in [0,T]} \sup_{x \in [-N_1,N_1]} |\mu_{n_{k_l}}(s,x)-\mu(s,x)| + C_{\mu,\sigma} \|\hat{Y}^l-\hat{Y}\|_\infty \Big).
  \end{align*}
  By the convergence of the kernels and coefficients there is an $L_2 \geq L_1$ such that, for all $l \geq L_2$,
  \begin{align*}
    C_{\mu,\sigma} (1 + N_1) \int_0^T | K_{\mu,n_{k_l}}(s) - K_\mu(s) | \dd s &\leq \frac{\eta}{3},\\
    \bar{K} \sup_{s \in [0,T]} \sup_{x \in [-N_1,N_1]} |\mu_{n_{k_l}}(s,x)-\mu(s,x)| & \leq \frac{\eta}{3}.
  \end{align*}
  Note that $\hat{\bP}(\bar{K} C_{\mu,\sigma} \|\hat{Y}^l- \hat{Y} \|_\infty \geq \frac{\eta}{3}) \leq \frac{\phi}{3}$ and
  \begin{align*}
    \hat{\bP} ( \| \hat{Y}\|_\infty \vee \| \hat{Y}^l\|_\infty \geq N_1) & \leq \hat{\bP} ( \{ \| \hat{Y}\|_\infty \geq N_1 \} \cup \{ \| \hat{Y}^l - \hat{Y}\|_\infty + \| \hat{Y}\|_\infty \geq N_1 \})\\
    & \leq \hat{\bP} \Big( \Big\{ \| \hat{Y}\|_\infty \geq \frac{N_1}{2} \Big\} \Big) + \hat{\bP} \Big( \| \hat{Y}^l - \hat{Y}\|_\infty \geq \frac{N_1}{2}\Big)\\
    & \leq \frac{2 \phi}{3}.
  \end{align*}
  Hence, for all $l \geq L_2$ we have
  \begin{align*}
    &\hat{\bP}(\|G^l-G\|_\infty \geq \eta)\\
    & \leq \hat{\bP} (\{\|G^l-G\|_\infty \geq \eta\} \cap \{ \| \hat{Y}\|_\infty \vee \| \hat{Y}^l\|_\infty < N_1\}) + \hat{\bP} ( \| \hat{Y}\|_\infty \vee \| \hat{Y}^l\|_\infty \geq N_1)\\
    & \leq \phi.
  \end{align*}
  It remains to show that
  \begin{equation*}
    \Big(\int_0^t K_{\sigma,n_{k_l}}(t-s) \dd \hat{M}^l_s \Big)_{t \in [0,T]} \stackrel{\hat{\bP}}{\longrightarrow} \Big(\int_0^t K_{\sigma}(t-s) \dd \hat{M}_s \Big)_{t \in [0,T]}.
  \end{equation*}
  As $\tilde{p}=\frac{p}{p-2}\leq 1 + \frac{\varepsilon}{2}$, using the Burkholder--Davis--Gundy inequality, we get, for any $t \in [0,T]$,
  \begin{align*}
    &\bE_{\hat{\bP}} \Big[ \Big( \int_0^t K_{\sigma,n_{k_l}} (t-s) \dd \hat{M}^l_s - \int_0^t K_{\sigma} (t-s) \dd \hat{M}_s\Big)^p \Big]^{\frac{1}{p}}\\
    & \quad\leq \bE_{\hat{\bP}} \Big[ \Big( \int_0^t K_{\sigma,n_{k_l}} (t-s) \dd (\hat{M}^l_s - \hat{M}_s)\Big)^p\Big]^{\frac{1}{p}} \\
    &\quad \qquad + \bE_{\hat{\bP}} \Big[ \Big( \int_0^t (K_{\sigma,n_{k_l}} (t-s) - K_{\sigma} (t-s)) \dd \hat{M}_s\Big)^p \Big]^{\frac{1}{p}}\\
    & \quad\leq \bE_{\hat{\bP}} \Big[ \Big( \int_0^t \big(K_{\sigma,n_{k_l}} (t-s) (\sigma_{n_{k_l}}(s, \hat{Y}^l_s) - \sigma(s, \hat{Y}_s))\big)^2 \dd s \Big)^{\frac{p}{2}} \Big]^{\frac{1}{p}}\\
    &\quad \qquad + \bE_{\hat{\bP}} \Big[ \Big( \int_0^t \big((K_\sigma(t-s) - K_{\sigma,n_{k_l}}(t-s)) \sigma(s,\hat{Y}_s)\big)^2 \dd s \Big)^{\frac{p}{2}} \Big]^{\frac{1}{p}}\\
    &\quad \leq C_{p,t} \Big(\Big( \int_0^t |K_{\sigma,n_{k_l}}(s)|^{2 \tilde{p}} \dd s \Big)^{\frac{p}{2 \tilde{p}}} \bE_{\hat{\bP}} \Big[ \int_0^t | \sigma_{n_{k_l}} (s, \hat{Y}^l_s)- \sigma (s, \hat{Y}_s)|^p \dd s \Big]^{\frac{1}{p}}\\
    &\quad \qquad + \Big( \int_0^t | K_\sigma(s) - K_{\sigma,n_{k_l}}(s)|^{2 \tilde{p}} \dd s \Big) ^{\frac{p}{2 \tilde{p}}} \bE_{\hat{\bP}} \Big[ \int_0^t | \sigma(s,\hat{Y}_s)|^p \dd s \Big]^{\frac{1}{p}} \Big).
  \end{align*}
  Note that $\int_0^t | K_{\sigma,n_{k_l}}(s)|^{2 \tilde{p}} \dd s$ is uniformly bounded in $l$. We can obtain
  \begin{equation*}
    \int_0^t \sigma_{n_{k_l}}(s,\hat{Y}^l_s) \dd s \stackrel{\hat{\bP}}{\longrightarrow} \int_0^t \sigma(s,\hat{Y}_s) \dd s
  \end{equation*}
  by the same steps we used to show \eqref{eq:conv_A}. One can mimic the proof of \cite[Lemma~3.4]{Promel2023} to obtain
	\begin{equation*}
		\sup_{t \in [0,T]} \bE_{\hat{\bP}}[| \hat{Y}^l_t|^p] \leq C_{p,L,\gamma,\epsilon,T,\mu,\sigma} \Big(1 + \bE_{\hat{\bP}}[|\hat{\xi}^l|^p] \sup_{t \in [0,T]} |g_{k_l}(t)|^p\Big)
	\end{equation*}
	where $C_{p,L,\gamma,\epsilon,T,\mu,\sigma}$ depends only on $p,L,\gamma,\epsilon,T$ and the linear growth constant $C_{\mu,\sigma}$ of the coefficients (see Assumption \ref{ass:coefficients2}). Since the $\hat{\xi}^l$, $l \in \bN$, are identically distributed with finite $p$-th moment, and since $\sup_{t \in [0,T]} |g_{k_l}(t)| \leq CT^\gamma +1$ by the $\gamma$-H{\"o}lder-continuity of the $g^l$, $l \in \bN$, we then get uniform $p$-integrability of $\hat{Y}^l$. Together with the uniform linear growth condition on $\sigma_{n_{k_l}}$, $l \in \bN$, one gets
  \begin{equation*}
    \bE_{\hat{\bP}} \Big[ \int_0^t | \sigma_{n_{k_l}} (s, \hat{Y}^l_s)- \sigma (s, \hat{Y}_s)|^p \dd s \Big] \to 0 \qquad \text{ as } l \to \infty.
  \end{equation*}
  Therefore, with Assumption~\ref{ass:kernel} we can conclude that
  \begin{equation*}
    \bE_{\hat{\bP}} \Big[ \Big( \int_0^t K_{\sigma,n_{k_l}} (t-s) \dd \hat{M}^l_s - \int_0^t K_{\sigma} (t-s) \dd \hat{M}_s\Big)^p \Big]^{\frac{1}{p}} \to 0 \qquad \text{ as } l \to \infty
  \end{equation*}
  and it follows that, for all $t \in [0,T]$,
  \begin{equation*}
    \int_0^t K_{\sigma,n_{k_l}}(t-s) \dd \hat{M}^l_s  \stackrel{\hat{\bP}}{\longrightarrow} \int_0^t K_{\sigma}(t-s) \dd \hat{M}_s \qquad \text{ as } l \to \infty.
  \end{equation*}
  By \eqref{eq:Y_as_Yama_Wata} we know that there is some continuous process $\hat{V}= (\hat{V}_t)_{t \in [0,T]}$ such that
  \begin{equation*}
    \sup_{r \in [0,T]} \Big| \int_0^r K_{\sigma,n_{k_l}}(r-s) \dd \hat{M}^l_s - \hat{V}_r \Big| \stackrel{\hat{\bP}}{\longrightarrow} 0 \qquad \text{ as } l \to \infty.
  \end{equation*}
  Using a uniqueness of limits argument, one obtains $\hat{V}_t = \int_0^t K_\sigma(t-s) \dd \hat{M}_s$ for all $t \in [0,T]$ and by the continuity we can conclude that the processes are indistinguishable. Taking a limit in probability in \eqref{eq:Y_as_Yama_Wata} or the $\hat{\bP}$-a.s. limit of some subsequence, we obtain that $\hat{X}$ is a solution to the SVE~\eqref{eq:SVE} such that $\E [\sup_{t\in [0,T]}|\hat{X}_t-X_t|^2]\geq \delta$, which contradicts the assumption that pathwise uniqueness holds for SVE~\eqref{eq:SVE} and, thus, completes the proof.
\end{proof}

\begin{proof}[Proof of Proposition~\ref{prop: stability SVE}]
  First notice that, due to Assumption~\ref{ass:norms} and Assumption~\ref{ass:coefficients}, there exist unique solutions $(X_t)_{t\in[0,T]}$ and $(\tilde{X}_t)_{t\in[0,T]}$ to the SVEs \eqref{eq:SVE} and \eqref{eq:SVE2}, see \cite[Theorem~1.1]{Wang2008}.

  Let $t\in [0,T]$ and $C>0$ be a generic constant, which may change from line to line. We get that
  \begin{align*}
    \E\big[ |X_t-\tilde{X}_t|^p \big]
    &= \E\bigg[\bigg| \xi\big(g(t)-\tilde{g}(t)\big) + \int_0^t K_\mu(t-s)\mu(s,X_s)\dd s - \int_0^t \tilde{K}_\mu(t-s)\tilde{\mu}(s,\tilde{X}_s)\dd s\\
    &\quad + \int_0^t K_\sigma(t-s)\sigma(s,X_s)\dd B_s - \int_0^t \tilde{K}_\sigma(t-s)\tilde{\sigma}(s,\tilde{X}_s)\dd B_s \bigg|^p\bigg]\\
    &\leq C \bigg(\sup_{s\in[0,T]}\big|g(s)-\tilde{g}(s)\big|^p + \E\Big[\Big|\int_0^t \big(K_\mu(t-s)-\tilde{K}_\mu(t-s)\big)\mu(s,X_s)\dd s\Big|^p\Big] \\
    &\qquad+\E\Big[\Big| \int_0^t \tilde{K}_\mu(t-s)\big(\mu(s,X_s)-\tilde{\mu}(s,\tilde{X}_s)\big)\dd s\Big|^p\Big]\\
    &\qquad + \E\Big[\Big|\int_0^t \big(K_\sigma(t-s)-\tilde{K}_\sigma(t-s)\big)\sigma(s,X_s)\dd B_s\Big|^p\Big]\\
    &\qquad + \E\Big[\Big| \int_0^t \tilde{K}_\sigma(t-s)\big(\sigma(s,X_s)-\tilde{\sigma}(s,\tilde{X}_s)\big)\dd B_s\Big|^p\Big] \bigg).
  \end{align*}
  Applying the Burkholder--Davis--Gundy inequality and H{\"o}lder's inequality with \eqref{def:p}, we deduce that
  \begin{align*}
    \E\big[ |X_t-\tilde{X}_t|^p \big]
    &\leq C \bigg(\|g-\tilde{g}\|_{\infty}^p + \Big(\int_0^t \big|K_\mu(t-s)-\tilde{K}_\mu(t-s)\big|^q\dd s\Big)^{\frac{p}{q}}\E\big[\int_0^t\big|\mu(s,X_s)\big|^p\dd s\big] \\\
    &\qquad+\Big(\int_0^t \big|\tilde{K}_\mu(t-s)\big|^q\dd s\Big)^{\frac{p}{q}}\E\big[\int_0^t \big|\mu(s,X_s)-\tilde{\mu}(s,\tilde{X}_s)\big|^p\dd s\big]\\
    &\qquad + \E\Big[\Big|\int_0^t \big(K_\sigma(t-s)-\tilde{K}_\sigma(t-s)\big)^2\sigma(s,X_s)^2\dd s\Big|^{\frac{p}{2}}\Big]\\
    &\qquad + \E\Big[\Big| \int_0^t \tilde{K}_\sigma(t-s)^2\big(\sigma(s,X_s)-\tilde{\sigma}(s,\tilde{X}_s)\big)^2\dd s\Big|^{\frac{p}{2}}\Big] \bigg)\\
    &\leq C \bigg(\|g-\tilde{g}\|_{\infty}^p + \|K_\mu-\tilde{K}_\mu\|_q^p\int_0^t\big(1+\E\big[\big|X_s\big|^p\big]\big)\dd s\\
    &\qquad+\Big(\int_0^t \big|\tilde{K}_\mu(t-s)\big|^q\dd s\Big)^{\frac{p}{q}}\int_0^t \E\big[\big|\mu(s,X_s)-\mu(s,\tilde{X}_s)\big|^p\big]\dd s\notag\\
    &\qquad + \Big(\int_0^t \big|K_\sigma(t-s)-\tilde{K}_\sigma(t-s)\big|^{2\tilde{q}}\dd s\Big)^{\frac{p}{2\tilde{q}}}\E\big[\int_0^t|\sigma(s,X_s)|^p\dd s\big]\notag\\
    &\qquad +\Big( \int_0^t \big|\tilde{K}_\sigma(t-s)\big|^{2\tilde{q}}\dd s\Big)^{\frac{p}{2\tilde{q}}} \E\Big[\int_0^t\big|\sigma(s,X_s)-\tilde{\sigma}(s,\tilde{X}_s)\big|^p\dd s\Big] \bigg).
  \end{align*}
  Using the regularity assumptions on $\mu$ and $\sigma$ (Assumption~\ref{ass:coefficients}) and the boundedness of all moments of Volterra processes (see \cite[Lemma~3.4]{Promel2023}), we get
  \begin{align*}
    \E\big[ |X_t-\tilde{X}_t|^p \big]
    \leq C &\bigg(\|g-\tilde{g}\|_{\infty}^p+\|\mu-\tilde{\mu}\|_{\infty}^p+\|\sigma-\tilde{\sigma}\|_{\infty}^p+\|K_\mu-\tilde{K}_\mu\|_{q}^p + \|K_\sigma -\tilde{K}_\sigma\|_{2\tilde{q}}^p\bigg)\\
    &\quad +C \int_0^t \E\big[\big|X_s-\tilde{X}_s\big|^p\big]\dd s.
  \end{align*}
  Applying Gr{\"o}nwall's lemma leads to
  \begin{align*}
    \E\big[ |X_t-\tilde{X}_t|^p \big]
    \leq C &\bigg(\|g-\tilde{g}\|_{\infty}^p+\|\mu-\tilde{\mu}\|_{\infty}^p+\|\sigma-\tilde{\sigma}\|_{\infty}^p+\|K_\mu-\tilde{K}_\mu\|_{q}^p + \|K_\sigma -\tilde{K}_\sigma\|_{2\tilde{q}}^p\bigg),
  \end{align*}
  which implies \eqref{eq:lem_stability} by taking the supremum on the left-hand side.
\end{proof}

\bibliography{literature}{}
\bibliographystyle{amsalpha}

\end{document}